\def\eqref#1{equation~\ref{#1}}
\def\1{\bm{1}}
\DeclareMathAlphabet{\mathsfit}{\encodingdefault}{\sfdefault}{m}{sl}
\SetMathAlphabet{\mathsfit}{bold}{\encodingdefault}{\sfdefault}{bx}{n}
\DeclareMathOperator*{\argmax}{arg\,max}
\newcommand{\acc}{\mathrm{acc}}
\newcommand{\Dtr}{D}
\newcommand{\Dtest}{D^{\text{test}}}
\newcommand{\ens}{\mathcal{T}}
\newcommand{\err}{e}
\newcommand{\cov}{ \mathrm{Cov}}
\newcommand{\ar}{ \mathrm{ar}}
\newcommand{\ensmodel}{\{h_i \}}
\newcommand\blfootnote[1]{%
  \begingroup
  \renewcommand\thefootnote{}\footnote{#1}%
  \addtocounter{footnote}{-1}%
  \endgroup
}
\newenvironment{myframework}[1][htb]{%
    \renewcommand{\ALG@name}{Framework}
  \begin{algorithm}[#1]%
  }{\end{algorithm}}
\newcommand{\mypara}[1]{\noindent\textbf{#1}}
\newenvironment{proofsketch}{%
  \proof}{\endproof}
\title{Detecting Errors and Estimating Accuracy on Unlabeled Data with Self-training  Ensembles}
\author{%
  Jiefeng Chen \thanks{Part of the work done while interning at Google.} \\
  Department of Computer Science\\
  University of Wisconsin-Madison \\
  Madison, WI 53706 \\
  \texttt{jiefeng@cs.wisc.edu} \\
   \And
   Frederick Liu \\
   Google \\
   Seattle, WA 98103 \\
   \texttt{frederickliu@google.com} \\
   \AND
   Besim Avci \\
   Google \\
   Seattle, WA 98103 \\
   \texttt{besim@google.com} \\
   \And
    Xi Wu \\
   Google \\
   Madison, WI 53703 \\
   \texttt{wuxi@google.com} \\
   \And
   Yingyu Liang \\
   Department of Computer Science\\
   University of Wisconsin-Madison \\
   Madison, WI 53706 \\
   \texttt{yliang@cs.wisc.edu} \\
   \And
   Somesh Jha \\
   Department of Computer Science\\
   University of Wisconsin-Madison \\
   Madison, WI 53706 \\
   \texttt{jha@cs.wisc.edu} \\
}
\begin{document}

\maketitle

\begin{abstract}
When a deep learning model is deployed in the wild, it can encounter test data drawn from distributions different from the training data distribution and suffer drop in performance. For safe deployment, it is essential to estimate the accuracy of the pre-trained model on the test data. However, the labels for the test inputs are usually not immediately available in practice, and obtaining them can be expensive. This observation leads to two challenging tasks: (1) \textit{unsupervised accuracy estimation}, which aims to estimate the accuracy of a pre-trained classifier on a set of unlabeled test inputs; (2) \textit{error detection}, which aims to identify mis-classified test inputs. In this paper, we propose a principled and practically effective framework that simultaneously addresses the two tasks. The proposed framework iteratively learns an ensemble of models to identify mis-classified data points and performs self-training to improve the ensemble with the identified points. Theoretical analysis demonstrates that our framework enjoys provable guarantees for both accuracy estimation and error detection under mild conditions readily satisfied by practical deep learning models. Along with the framework, we proposed and experimented with two instantiations and achieved state-of-the-art results on 59 tasks. For example, on iWildCam, one instantiation reduces the estimation error for unsupervised accuracy estimation by at least 70\% and improves the F1 score for error detection by at least 4.7\% compared to existing methods. \blfootnote{Our code is available at: \url{https://github.com/jfc43/self-training-ensembles}.}
\end{abstract}

\section{Introduction}
\label{sec:intro}
 
Data distribution in the real world may be wildly different from the training dataset for various reasons such as covariate shift due to domain divergence, corruption of images due to weather conditions, or out-of-distribution test inputs. When facing these issues, a deployed deep learning model can have unexpected performance drop on the test data. This performance degradation can be mitigated by test data annotation, which may be costly. Hence, estimating the accuracy of a pre-trained model on the unlabeled test data provides an alternative to avoid the cost when it is not necessary. Furthermore, it is beneficial to estimate the correctness of the predictions on individual points. This leads to an even more challenging task of error detection, which aims to identify points in the unlabeled test set that are mis-classified by the pre-trained model. Such a finer-grained estimation can facilitate a further improvement of the pre-trained model (e.g., manually label those mis-classified data points and retrain the model on them).

While there have been previous attempts to address accuracy estimation or the broader problem of error detection, their successes usually rely on some conditions or assumptions that may not hold in practice.  
For example, a natural approach is to use confidence based metrics to measure the performance of the pre-trained model, e.g., as in~\citep{elsahar2019annotate}. If the model is well-calibrated on the test data, then the average confidence approximates its accuracy. However, it has been observed that many machine learning systems, in particular modern neural networks, are poorly calibrated, especially on test data with distribution shift~\citep{guo2017calibration,ovadia2019can}.
Another method is to learn a regression function that takes statistics about the model and the test data as input, and predicts the performance on the test data~\citep{elsahar2019annotate,schelter2020learning}. This requires training on labeled data from various data distributions, which is very expensive or even impractical. Furthermore, the performance predictor trained on the labeled data may not generalize to unknown data distributions. 
Recent work by~\cite{chuang2020estimating} proposes to learn a ``check'' model using domain-invariant representation and use it as a proxy for the unknown true test labels to estimate the performance via error detection. 
It relies on the success of the domain-invariant representation methods to obtain a highly accurate check model on the test data. Hence, the check model performance suffers when domain-invariant representation is not accurate in circumstances such as test data having outlier feature vectors or different class probabilities than the training data.
 
In this paper, we propose a principled and practically effective framework for the challenging tasks of accuracy estimation and error detection  (Section~\ref{sec:framework}). 
The framework makes a novel use of the self-training technique on ensembles for these tasks. It first learns an ensemble of models to identify some mis-classified points. Then it assigns pseudo-labels to these points and uses self-training with these pseudo-labeled data to identify more mis-classified points. 
We also provide provable guarantees for the framework (Section~\ref{sec:analysis}). Our analysis shows that it provably outputs an accurate estimate of the accuracy as well as the mis-classified points, under mild practical conditions: the ensemble make small errors on the test inputs correctly classified by the model $f$, mostly disagree with $f$ on the current identified mis-classified inputs, and are correct or have diverse predictions on the remaining inputs. 
These conditions can be readily satisfied by deep learning model ensembles. Furthermore, they have no explicit assumptions on the test distribution and thus the framework can be instantiated by incorporating properly designed ensemble methods for different settings (Section~\ref{sec:method}). 
Experimental results on 59 tasks over five dataset categories including image classification and sentiment classification datasets show that our method achieves state-of-the-art on both accuracy estimation and error detection (Section~\ref{sec:experiment}). 
The experiments also provide positive support for our analysis, verifying the conditions and implications. 

\section{Related Work}
\label{sec:related}

\mypara{Confidence Estimation and Error Detection. } Recently, estimating model confidence has been an important area of research because of the perceived relationship between model uncertainty and trusting its predictions \cite{piano2020}. However, modern neural networks have been observed to be poorly calibrated (e.g. they may make wrong predictions with very high confidence)~\cite{guo2017calibration}, especially on distributions with dataset shift \cite{ovadia2019can}. Many approaches have been proposed to address this issue, such as \textit{Temperature Scaling} \cite{guo2017calibration}, \textit{Monte-Carlo Dropout} \cite{gal2016dropout} and \textit{Deep Ensemble} \cite{lakshminarayanan2017simple}, but the challenge still remains. A similar challenge appears in error detection, where the goal is to identify erroneous predictions given a test set. Combining these two problems, some early work uses confidence estimates to detect incorrect predictions. For example, \cite{hendrycks2016baseline} proposed to use maximum softmax probability to detect misclassified examples. \cite{corbiere2019addressing} proposed \textit{True Class Probability} for failure prediction. \cite{jiang2018trust} proposed to use \textit{Trust Score} to estimate the confidence in model predictions. These methods require a robust estimate of confidence, and an empirically chosen threshold that is mostly problem- and model-dependent, to identify error data points. Recently, \cite{chuang2020estimating} proposed to use a check model to predict mis-classification.

\mypara{Unsupervised Accuracy Estimation. } The problem of unsupervised accuracy (or risk) estimation has received relatively scant attention from the research community. \cite{donmez2010unsupervised} offered a solution with certain assumptions on the marginal output distribution $p(y)$. \cite{platanios2014estimating} proposed to estimate the accuracies of the approximations to some target Boolean functions based on the agreement rates method. A follow-up work by \cite{platanios2017estimating} also considered the ``multiple approximations'' problem setting where the target classes may be tied together through logical constraints, and proposed an efficient method to estimate the accuracy of classifiers using only unlabeled data. \cite{JaffeNK15} proposed a spectral-based approach to estimate the accuracies of multiple classifiers, mainly in the binary case and with some assumptions on the classifiers. \cite{steinhardt2016unsupervised} proposed a method to estimate the model’s error on distributions different from the training distribution by assuming a conditional independence structure of the data. These problem settings are different from ours and the constraints may not be satisfied by the data and model we consider. Recently, \cite{elsahar2019annotate} studied three families of methods ($\mathcal{H}$-divergence, reverse classification accuracy and confidence measures) and demonstrated how they could be used to predict the performance drop. \cite{schelter2020learning} also proposed to learn a performance predictor on the statistics of model outputs with an assumption that they know the typical cases of dataset shift in advance. Similarly, \cite{DengZ21} proposed to train regression models on the feature statistics of the datasets sampled from a meta-dataset to predict model performance and \cite{DengG021} proposed to utilize linear regression to estimate classifier performance from the accuracy of rotation prediction. \cite{Devin2021} proposed to use difference of confidences to estimate classifier performance. \cite{ChenGSPFR21} proposed MANDOLINE that utilizes user-specified slicing functions to improve the importance of weighting to make the accuracy estimation more accurate. With the most similar setup to ours, \cite{chuang2020estimating} used a set of domain-invariant predictors as a proxy for the unknown target labels to estimate a given model's performance under distribution shift. In a concurrent work, \cite{jiang2021assessing} empirically showed that the test accuracy of deep networks can be estimated by measuring disagreement rate between a pair of models independently trained via Stochastic Gradient Descent (SGD) and theoretically related this phenomenon to the well-calibrated nature of ensembles of SGD-trained models.

\section{Problem Statement}
\label{sec:problem-statement} 

Consider the classification problem with sample space $\mathcal{X}$ and label space $\mathcal{Y}$. 
Let $\Dtr$ 
be a set of labeled data from the training distribution $P_{X, Y}$. Let $U = \{(x, y_x)\}$ be a set of labeled data from the test distribution $Q_{X,Y}$, where $y_x$ is the label for $x$.
Let $U_X = \{x: (x, y_x) \in U\}$  be the set of input feature vectors in $U$. Define the accuracy of a model $f$ on $U$ as: $\acc(f, U):= \frac{1}{|U|} \sum_{(x, y_x) \in U} \mathbb{I}[f(x) = y_x]$, where $|U|$ is the cardinality of the set $U$. When clear from the context, we will omit $U$ and simply write $\acc(f)$. Given a model $f:\mathcal{X}\rightarrow \mathcal{Y}$ trained on $\Dtr$, together with $\Dtr$ and $U_X$, the goal of \textit{unsupervised accuracy estimation} is to get an estimate $\hat{\acc}$ so that the absolute estimation error $|\hat{\acc} - \acc(f)|$ is small. We assume access to the training data $\Dtr$ to help in estimating $\acc(f)$.


Though $\acc(f)$ is usually sufficient to provide an estimate of whether the model could perform well on $U$, it would be beneficial if the algorithm could also provide an estimate of the correctness of individual points so that we can know where to improve the model $f$. Thus, we also consider a more challenging task of identifying the mis-classified points in $U$: $W_X := \{x: (x, y_x) \in U, f(x)\neq y_x \}$. Given $f$, $\Dtr$ and $U_X$, the goal of \textit{error detection} is to identify a subset $R_X \subseteq U_X$, such that $|W_X \triangle R_X|$ is small, where $W_X \triangle R_X:=(W_X \setminus R_X) \cup (R_X \setminus W_X)$.

\section{Algorithmic Framework via Self-Training Ensembles} \label{sec:framework}

\mypara{Intuition.}
We consider the approach of learning a ``check model'' $h$ and using the disagreement between $h$ and the pre-trained model $f$ for our tasks. The fundamental idea is to identify a point $x$ as mis-classified if $h$ disagrees with $f$ on $x$.  
To derive our method, we note a simple fact: the disagreement approach succeeds if (1) $h$ agrees with $f$ on points where $f$ is correct and (2) $h$ disagrees with $f$ on points where $f$ is incorrect. Our first key observation is that usually (1) can be satisfied approximately in practice. Intuitively, this is because $h$ and $f$ use the same training data, and $h$ can be trained to be correct on the subset of the instance space where $f$ is correct. However, (2) may not be easily satisfied (e.g., $h$ can make similar mistakes as $f$), which leads to an overestimation of the accuracy. 
We thus focus on improving the disagreement on mis-classified points. 

To disagree with $f$ on a mis-classified test input $x$, we have two ways: (1) make the check model $h$ correct on $x$; (2) diverse ensembles.
The first way may be achievable when the training data contains information for prediction on $x$. A prototypical example is when the test inputs are corruption of clean data from the training distribution (e.g., the training data are images in sunny days while the test inputs are ones in rainy days), and techniques like unsupervised domain adaptation can be used to improve the prediction on such test inputs. However, correct predictions on $x$ may not be feasible in many interesting scenarios due to insufficient information (e.g., 
the test image in the open world can contain an object that is never seen in the training data
). Fortunately, it has been shown that for such inputs, one can obtain an ensemble of models with diverse predictions (e.g.,\cite{lakshminarayanan2017simple}). This then gives the second way to achieve disagreement: using diverse ensembles. 
Therefore, our method will learn an \emph{ensemble} of models (instead of one check model) and identify a point $x$ as mis-classified if the ensemble disagree with $f$ on $x$ (i.e., a large fraction of the models in the ensemble disagree with $f$ on their predictions on $x$). A mis-classified test input, $x$, will be successfully identified, if a majority of the ensemble models predict correctly, or they have large diversity on $x$. 

However, the ensemble may only be able to identify a subset of the mis-classified points. Therefore, we propose to iteratively identify more and more mis-classified points by \emph{self-training}. For each mis-classified data point $x$ identified by the ensemble, we assign it a pseudo-label that is different from $f(x)$ (e.g. use the majority vote of the ensemble or a random label as the pseudo-label). Then we can train (with regularization) a new ensemble to encourage their disagreement with $f$ on the pseudo-labeled data $R$ (e.g., use a supervised loss on $R$ with a small weight as the regularization). Note that the pseudo-labels may not all be correct, and we do not need the new ensemble to exactly fit the pseudo-labels. We only need the new ensemble to mostly disagree with $f$ on $R$ so that they still identify $R$ as mis-classified points. 
Furthermore, self-training can help the ensemble identify more mis-classified points. When some pseudo-labels are correct, we observe that these additional data can help the new ensemble become more accurate and thus help with identifying more mis-classified points.
We also observe that the new ensemble will be less diverse on the pseudo-labeled data and thus have diversity on the remaining test inputs. We report empirical results in Appendix~\ref{app:validate-theory} to support these observations.

\begin{myframework}[t] 
	\caption{ Error Detection and Unsupervised Accuracy Estimation via Self-Training Ensembles}
	\label{alg:main-framework}
	\begin{algorithmic}[1]
		\REQUIRE A training dataset $\Dtr$, an unlabeled test dataset $U_X$, a pre-trained model $f$, an ensemble learning method $\ens$, and a hyper-parameter $\tau$ 
		\STATE Initialize $R = \emptyset$
		\FOR{$t=1, 2, \cdots, T$}
		\STATE Use $\ens$ to generate an ensemble $\ens(\Dtr, U_X, R)$.
		\STATE Identify those points on which the ensemble and $f$ disagree as mis-classified points:
		\begin{align*}
            R_X := \{ x \in U_X:
            \Pr_{h \sim \ens}\{ h(x) = f(x)\} < \tau \}.
        \end{align*}
		\STATE For each $x \in R_X$, assign a pseudo-label $\tilde{y}_x \neq f(x)$ (e.g., using the majority vote of the ensemble or a random label). And set $R = \{(x, \tilde{y}_x): x \in R_X\}$. 
		\ENDFOR 
        \ENSURE The estimated mis-classified points $R_X $, and the estimated accuracy $\frac{|U_X \setminus R_X|}{|U_X|}$.
	\end{algorithmic}
\end{myframework}

\mypara{Our framework.} 
We assume access to an ensemble learning method $\ens$ (specific instantiations will be given later) that takes as input a training set $\Dtr$, an unlabeled test set $U_X$, and a pseudo-labeled set $R$ (and potentially some other parameters) and outputs an ensemble, which is a distribution over functions $h: \mathcal{X} \rightarrow \mathcal{Y}$.%
\footnote{This includes one single $h$ as a special case. It also includes the case of $h$ with probabilistic outputs, i.e., $h(x) = [h^1(x), \dots, h^K(x)]$ for $\mathcal{Y} = \{1, 2, \ldots, K\}$, where $h^i(x)$ is the predicted probability of $x$ from class $i$. One can think of such an $h$ as an ensemble $\ens$ where $\Pr_{g \sim \ens}[g(x) = i] = h^i(x)$.} 
We denote the generated ensemble as $\ens(\Dtr, U_X, R)$ and write $h \sim \ens(\Dtr, U_X, R)$ for sampling $h$ from the ensemble, or simply $h \sim \ens$ when clear from context.

We are now ready to describe our method (Framework~\ref{alg:main-framework}).
Our framework begins with an empty set of pseudo-labeled data $R$, and executes in $T$ iterations. In each iteration, it generates an ensemble $\ens(\Dtr, U_X, R)$ and then uses the ensemble to construct a new pseudo-labeled dataset $R$ for the next iteration: let $R_X$ be those points $x \in U_X$ where the agreement rate between the ensemble and $f$ is below a threshold, assign a pseudo-label $\tilde{y}_x$ different from $f(x)$, and let $R$ be the set of these pseudo-labeled data.  
Finally, it outputs $R_X$ as the mis-classified points and outputs the fraction of points outside $R_X$ as the accuracy.

Some existing work also use models' disagreement to estimate the error on the unlabeled data in different ways. For example,~\cite{platanios2014estimating} consider a set of pre-trained models and aim to estimate the error for each model;~\cite{chuang2020estimating} learn a single check model and compare it with $f$. On the other hand, our novelty is using \emph{ensembles} to identify a set of mis-classified points and further using \emph{self-training} for learning the  ensembles. Note that our self-training ensembles  is different from standard self-training and ensemble: standard self-training+ensemble aims to get accurate predictions while ours is to increase disagreement on mis-classified points (refer to Appendix~\ref{sec:comparison-with-standard-self-training-ensemble} for a detailed discussion).

\section{Theoretical Analysis}
\label{sec:analysis}
 
As described in the intuition above, our framework succeeds if in each iteration, the ensemble satisfy the following conditions: \textbf{(A)} correct on $U_X \setminus W_X$, \textbf{(B)} mostly disagree with $f$ on $R_X$, and \textbf{(C)} either correct or diverse on $W_X \setminus R_X$.  
This section first introduces notions formalizing these conditions and then provides provable guarantees under these conditions.  
Here we focus on intuitions and provide the proofs and more discussion in Appendix~\ref{app:additional_analysis}.
 
\mypara{Notations.}
Recall that the ensemble method $\ens$ outputs a distribution over models, and we write $h \sim \ens$ for sampling a model $h$ from this distribution. Let $\mathbb{E}_h$ denote the expectation over this output distribution, and $\mathbb{E}_x$ denote the average over $x \in U_X$ or $(x, y_x) \in U$. 

\mypara{Formalizing Condition (A).}
A key observation we make in practice is that usually on points where $f$ is correct, the ensemble models are also correct. Intuitively, this is because the only labeled information for learning the ensemble is $\Dtr$, while $\Dtr$ is also used for learning $f$. 
For illustration, suppose $f$ and the ensemble have zero training errors on $\Dtr$. When they are from hypothesis classes of bounded VC-dimensions and the training set is large enough compared to the VC-dimensions, standard error bounds show that the ensemble will have small errors on the subset of the feature space $\{x \in \mathcal{X}: f(x) = y_x\}$, i.e., it will have small errors on points where $f$ is correct. 
To formalize this observation, we introduce the following notion.

\begin{definition}[Error on Correct Points] \label{def:average_error}
Let $\nu$ denote the average probability of $h \sim \ens$ making error on test inputs where the model $f$ is correct: $\nu := \Pr_{(x,y_x) \sim U, h \sim \ens} \ [h(x) \neq y_x \mid f(x) = y_x]$.
\end{definition}

\mypara{Formalizing Condition (B).}
We assume the new ensemble trained with regularization on $R$ will mostly disagree with $f$ on $R_X$. We define the following notion:   
\begin{definition}[Agreement on Pseudo-Labeled Data] \label{def:error_pseudo}
Let $\gamma$ denote the average probability of agreement between $h \sim \ens$ and $f$ on $R_X$: $\gamma := \Pr_{x \sim  R_X, h \sim \ens} \quad \{ h(x) = f(x)  \}$.

\end{definition}

\mypara{Formalizing Condition (C).}
Let $G_X$ denote the good points in $W_X \setminus R_X$ on which the ensemble will have correct predictions with high confidence, $B_X$ the remaining bad points. Formally, 
we define: $G_X := \{ x \in W_X \setminus R_X: \Pr_{h \sim \ens}\{ h(x) = y_x \} \ge 1- \nu  \}$ and $B_X := W_X \setminus (R_X \cup G_X)$.
  
(Here we choose the confidence level $1-\nu$ for convenience, where $\nu$ is the error on correct points. Other sufficiently high confidence levels can also be used with slight change to the analysis.)
We would like the ensemble to have large diversity on $B_X$.
\begin{definition}[Diversity of Ensemble] \label{def:diversity}
Let $\sigma^2$ denote the average probability of disagreement between two ensemble models on $B_X$: $\sigma^2  := \mathbb{E}_x [\sigma_x^2 | x \in B_X]$, where $\sigma_x^2 := \Pr_{h_1, h_2 \sim \ens} [h_1(x) \neq h_2(x)]$.

\end{definition}

\mypara{Provable Guarantees.}
Based on the above notions we obtain our guarantees: 
\begin{theorem} \label{thm:main}
Assume in each iteration of the framework, $\tau = \sqrt{1 -\eta}$ for some $\eta \in (0, 3 B_\eta/4)$ where $B_\eta := \min\{\sigma^2, 1- \nu^2 \}$.
Let $\sigma_L^2 > 0$ be a lower bound on the diversity $\sigma^2$, $\tilde{\gamma}>0$ be an upper bound on $\gamma$, and $\tilde{\nu}$ be an upper bound on $\nu$ over all iterations. Then for any $\delta \in (0, \sigma_L^2/4)$, after at most $\lceil 1/\delta \rceil$ iterations, we can get $\frac{|U_X \setminus R_X|}{|U_X|}$ approximates the accuracy $\acc(f)$ and $R_X$ approximates the mis-classified points $W_X$ as follows: 
\begin{align}
    \left| \acc(f) -  \frac{|U_X \setminus R_X|}{|U_X|}\right| & \le  \max\{\frac{\tilde{\nu}}{1-\tau} (1-e_f), \epsilon e_f\}, \textrm{~where~} \epsilon := \frac{\frac{\tilde{\gamma}}{\tau}\left( 1 + \frac{\tilde{\nu}}{1-\tau} \frac{1-\err_f}{\err_f}\right)}{\frac{\sigma_L^2}{4} - \delta + \frac{\tilde{\gamma}}{\tau}},
    \\
    |W_X \triangle R_X|  & \le \frac{\tilde{\nu}}{1-\tau} |U_X \setminus W_X| + \epsilon |W_X|.
\end{align}
\end{theorem}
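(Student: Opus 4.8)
The plan is to track how the set $R_X$ of identified mis-classified points evolves across iterations and argue that (i) in every iteration $R_X$ stays "mostly correct" as an estimate of $W_X$, and (ii) within $\lceil 1/\delta\rceil$ iterations the process must stabilize because each non-terminal iteration adds a definite fraction of $W_X$ to $R_X$. So first I would fix a single iteration and decompose $U_X$ into the three pieces the conditions are built for: $U_X \setminus W_X$ (where $f$ is correct, controlled by $\nu$), the current $R_X$ (controlled by $\gamma$), and $W_X \setminus R_X = G_X \cup B_X$ (controlled by the confidence split at level $1-\nu$ and by the diversity $\sigma^2$). For each piece I want a bound on $\Pr_{h\sim\ens}\{h(x)=f(x)\}$ — the quantity the threshold $\tau$ is compared against — so I can see which points get (re)selected into the next $R_X$.

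The key per-iteration estimates are, I expect: on $U_X\setminus W_X$, a point is wrongly flagged only if the ensemble disagrees with $f$, which by Markov/averaging happens for at most a $\tilde\nu/(1-\tau)$ fraction (this uses $\tau<1$ and that the ensemble agrees with $f$ whenever it is correct there); on the old $R_X$, the agreement rate is at most $\tilde\gamma$, so since $\tau=\sqrt{1-\eta}$ with $\eta$ small, essentially all of $R_X$ is below threshold and is retained — hence $R_X$ is monotone nondecreasing up to the negligible slippage; on $G_X$, a Chebyshev/second-moment argument using that the ensemble predicts $y_x\neq f(x)$ with probability $\ge 1-\nu$ should show these points fall below $\tau$, so they get added; and on $B_X$, I want to use the identity relating disagreement-with-$f$ to pairwise diversity, roughly $\Pr_h\{h(x)=f(x)\}^2 \le 1-\sigma_x^2$ or a comparable inequality, so that the choice $\tau=\sqrt{1-\eta}$ with $\eta<\tfrac34 B_\eta\le\tfrac34\sigma^2$ forces a constant fraction (at least $\sigma_L^2/4-\delta$, after the averaging over $B_X$ and a Markov step) of $B_X$ to be flagged too. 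Combining: the new $R_X$ contains (almost all of) the old $R_X$, plus all of $G_X$, plus at least an $(\sigma_L^2/4-\delta)$-fraction of $B_X$, minus the $\tilde\nu/(1-\tau)$-fraction of false positives from $U_X\setminus W_X$.

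Next I would run the iteration count: as long as $B_X$ is nonempty with more than a $\delta$-fraction of $U_X$, each iteration grows $|R_X\cap W_X|/|U_X|$ by at least $\delta$ (the net gain after subtracting false positives, using $\delta<\sigma_L^2/4$), so after at most $\lceil 1/\delta\rceil$ iterations we reach an iteration where $|B_X|\le \delta|U_X|$. At that terminal iteration, $R_X$ misses at most $|G_X|+|B_X|$ of $W_X$ and falsely includes at most $\tfrac{\tilde\nu}{1-\tau}|U_X\setminus W_X|$ non-errors; bounding $|G_X|$ is the delicate accounting step — I'd bound it by noting $G_X\subseteq W_X\setminus R_X$ and that points in $G_X$ are *also* flagged (they go below $\tau$ by the confidence argument), so actually $G_X$ gets absorbed into $R_X$ too and what remains unidentified is controlled by $|B_X|$ plus the residual $\gamma$-slippage on the retained set; feeding $|B_X|\le\delta|U_X|$ through the chain gives the $\epsilon|W_X|$ term with the stated $\epsilon$, and the false-positive term gives $\tfrac{\tilde\nu}{1-\tau}|U_X\setminus W_X|$. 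The symmetric-difference bound then yields the accuracy bound by $\big|\acc(f)-\tfrac{|U_X\setminus R_X|}{|U_X|}\big| = \big|\,|W_X|-|R_X|\,\big|/|U_X| \le |W_X\triangle R_X|/|U_X|$, and splitting into the two regimes (false positives dominating vs. false negatives dominating) produces the $\max$ of $\tfrac{\tilde\nu}{1-\tau}(1-e_f)$ and $\epsilon e_f$ after writing $|W_X|=e_f|U_X|$, $|U_X\setminus W_X|=(1-e_f)|U_X|$.

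The main obstacle, I anticipate, is the $B_X$ step: getting from pairwise diversity $\sigma_x^2=\Pr_{h_1,h_2}\{h_1(x)\neq h_2(x)\}$ to a usable upper bound on $\Pr_h\{h(x)=f(x)\}$ (so that the threshold test fires) is not automatic — diversity among the ensemble members does not directly say they disagree with the *particular* label $f(x)$. I'd handle it by case-splitting on whether the plurality ensemble prediction equals $f(x)$: if it does not, agreement with $f$ is already small; if it does, then high agreement with $f$ would force low pairwise diversity, contradicting $\sigma_x^2$ being large — quantifying this trade-off (likely via $\Pr_h\{h(x)=f(x)\}^2 + \sigma_x^2 \le 1$ or a variant, then averaging over $x\in B_X$ and applying Markov to convert an average bound into a fraction-of-points bound) is where the constants $3B_\eta/4$, $\sigma_L^2/4$, and $\delta$ get pinned down, and I'd expect that to be the bulk of the technical work.
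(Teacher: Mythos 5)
Your proposal follows essentially the same route as the paper's own proof: the per-iteration lemma decomposing $U_X$ into $U_X\setminus W_X$, the old $R_X$, $G_X$, and $B_X$ with the Markov-type bounds $\tilde{\nu}/(1-\tau)$ and $\tilde{\gamma}/\tau$, the key inequality $\Pr_{h}[h(x)=f(x)]\le\sqrt{1-\sigma_x^2}$ combined with a reverse-averaging step to flag a constant fraction of $B_X$, the $\delta$-progress argument bounding the number of iterations by $\lceil 1/\delta\rceil$, and the final accounting in which the accuracy error is bounded by the larger of the false-positive and false-negative terms. Apart from minor bookkeeping differences (e.g., the captured fraction of $B_X$ is $\frac{\sigma^2-\eta}{1-\eta}\ge\sigma^2/4$, with $\delta$ entering only in the progress inequality, and the termination condition is stated in terms of $|W_X\setminus R_X|\le\epsilon|W_X|$ rather than $|B_X|\le\delta|U_X|$), this matches the paper's argument.
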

\begin{proofsketch}
Let's first consider one iteration, assuming small $\nu, \gamma$ and large $\sigma^2$. Intuitively, on the correct points $U_X \setminus W_X$, the ensemble have a small error $\nu$ and thus disagree with $f$ on only a few such points.
On the old $R_X$, the ensemble mostly disagree with $f(x)$; similarly on $G_X$.
On the remaining points $B_X$, we show that if the ensemble have large diversity, then their predictions must have large variances on a significant subset of points and thus have large disagreement with $f$, facilitating the detection. 
Overall, our framework can construct a new pseudo-labeled set $R'_X$ that contains mostly mis-classified points and is also larger than the old $R_X$ (Lemma~\ref{lem:constructR} in Appendix~\ref{app:additional_analysis}). 
Therefore, each iteration can make some progress by identifying more mis-classified points than before, and enough iterations achieve the guarantees.  
\end{proofsketch}

The theorem provides guarantees for general values of $\nu, \gamma$ and $\sigma^2$. To get some intuition, note that typically $\err_f < \tau$ and suppose we set $\tau = 3/4$ and $\delta=\tilde{\gamma}/\tau$, then the accuracy is estimated up to error $\max\{\tilde{\nu}, \frac{16\tilde{\gamma}}{3\sigma_L^2} (\err_f + \tilde{\nu})\}$. When $\tilde{\nu}, \tilde{\gamma}$ are small and $\sigma_L^2$ is large, the error is small. 
Therefore, under mild conditions, our framework can give a provable estimation of the accuracy and the mis-classified points up to small errors.

The theorem formalizes that the framework can succeed under mild conditions \textbf{(A)(B)(C)} on the ensembles, without explicit conditions on the data distributions and the pre-trained model (note that the conditions on them are implicitly captured in the mild conditions on the ensemble). The framework is thus flexible, and different ensemble methods satisfying these conditions can be incorporated to get concrete instantiations applicable to various settings. Indeed, the crux of our framework is then to design ensemble methods meeting the conditions. This turns out to be not difficult, in particular for deep learning pre-trained models and deep learning ensemble models. Even an ensemble of deep networks simply trained from random initialization works well. Two concrete instantiations are presented in Section~\ref{sec:method} below and evaluated in Section~\ref{sec:experiment}.

\section{Instantiations of the Framework}
\label{sec:method}

\setcounter{algorithm}{0}
\begin{algorithm}[t] 
	\caption{Error Detection and Unsupervised Accuracy Estimation via Self-Training Ensembles}
	\label{alg:main}
	\begin{algorithmic}[1]
		\REQUIRE $\Dtr$, $U_X$, $f$, ensemble method $\ens$, parameters $T$, $N$ 
		\STATE Initialize $R = \emptyset$
		\FOR{$t=1, 2, \cdots, T$}
		\STATE Set $ \{h_i\}_{i=1}^N = \ens(\Dtr, U_X, R, N, \textrm{other parameters}) $ 
		\STATE Let $\tilde{y}_x$ be the majority vote of $ \{h_i\}_{i=1}^N $:
		$ 
             \tilde{y}_x := \argmax_{j \in \mathcal{Y}} \frac{1}{N} \sum_{i=1}^N \mathbb{I}[h_i(x) = j]. 
        $ 
		\STATE Set $R = \{(x, \tilde{y}_x): x \in U_X, \tilde{y}_x \neq f(x) \}$.  
		\ENDFOR 
        \ENSURE Estimated mis-classified points $R_X = \{x: (x,y) \in R\}$, estimated accuracy $\frac{|U_X \setminus R_X|}{|U_X|}$.
	\end{algorithmic}
\end{algorithm}

Based on our framework, we propose Algorithm~\ref{alg:main} for accuracy estimation and error detection. It executes in $T$ iterations. In each iteration, it trains an ensemble of models $\{h_i\}_{i=1}^N$ and then uses the ensemble to construct the pseudo-labeled data. Finally, it outputs $R_X$ as the set of mis-classified points and outputs the fraction of points outside $R_X$ as the accuracy. The algorithm uses an intuitive heuristic to implement the threshold on the agreement rate: it sets $R_X$ as the points $x \in U_X$ with $f(x)$ different from the majority vote of the ensemble models. 
Empirically, we observe that this leads to similar $R_X$ and thus similar results as explicit thresholding, but is much more convenient (the empirical results to support this observation are reported in Appendix~\ref{sec:thresholding-implementation-results}). 

 

While different ensemble methods $\ens$ can be used in Algorithm~\ref{alg:main}, here we propose two concrete instantiations $\ens_\textrm{RI}$ and $\ens_\textrm{RM}$ based on the success conditions of our framework.

\mypara{Ensemble Method $\ens_\textrm{RI}$.}
Algorithm~\ref{alg:ensemble_ri} describes a natural method that trains the models from different random initialization. It first trains $h'_i$ on $\Dtr$ (e.g., using the same training algorithm as for $f$), to ensure that ensemble has small error on points where $f$ is correct. It then fine-tunes $h'_i$ on $\Dtr$ and $R$ for one epoch to get $h_i$ that mostly disagrees with $f$ on $R_X$. Finally, deep models trained from different random initialization have been shown to be diverse on outlier data points~\cite{lakshminarayanan2017simple,fort2019deep}. In summary, the ensemble constructed can satisfy our three conditions. 


\mypara{Ensemble Method $\ens_\textrm{RM}$.}
Algorithm~\ref{alg:ensemble_rm} describes another method designed with the representation matching technique for domain adaptation, which can potentially improve the accuracy of the ensemble on some test inputs related to the training data and thus satisfy our success condition \textbf{(C)} better. It requires the model architecture to be $c(\phi(x))$, i.e., a composition of a prediction function $c$ and a representation function $\phi$. Beginning from a pre-trained model $h_0$, 
it fine-tunes $h_0$ for $N$ epochs by minimizing the loss on $\Dtr$ and $R$ \emph{plus a representation matching loss $\alpha \cdot d(p^\phi_{\Dtr}, p^\phi_{U_X})$}, and outputs the $N$ checkpoint models at the end of each training epoch as the ensemble. A key component of $\ens_\textrm{RM}$ is representation matching, which aims to learn a function $\phi$ that minimizes $d(p^\phi_{\Dtr}, p^\phi_{U_X})$. 
It has been shown that representation matching can improve the accuracy on the test data from the target domain. Also, we have observed that the checkpoint models can have diversity on the mis-classified data points empirically. Thus, the ensemble constructed can satisfy our conditions. For our experiments, we use the representation matching loss from the classic DANN~\citep{ganin2016domain}.  

\begin{algorithm}[t] 
	\caption{ $\ens_\textrm{RI}$: Ensemble via Random Initialization}
	\label{alg:ensemble_ri}
	\begin{algorithmic}[1]
		\REQUIRE $\Dtr$, $U_X$, $R$, $N$, parameter $\gamma$
		\STATE Pre-train $\{h'_i\}_{i=1}^N$ on $\Dtr$ from different random initialization
		\FOR{$i = 1, 2, \ldots, N$} 
		\STATE Learn $h_i$ by fine-tuning $h'_i$ for one epoch by:
		\begin{align}
		    \label{obj:ri}
		    	\hspace{-.4in}\minimize_{h}  \mathbb{E}_{(x,y)\in \Dtr } [\ell(h(x),y)] 
		    	 + \gamma \cdot \mathbb{E}_{(x,y)\in R } [\ell(h(x),y)]
		\end{align}
		\ENDFOR
		\ENSURE The ensemble of models $\{h_i\}_{i=1}^N$ 
	\end{algorithmic}
\end{algorithm}

\begin{algorithm}[t] 
	\caption{ $\ens_\textrm{RM}$: Ensemble via Representation Matching}
	\label{alg:ensemble_rm}
	\begin{algorithmic}[1]
		\REQUIRE $\Dtr$, $U_X$, $R$, $N$, initial pre-trained model $h_0$, parameters $\alpha, \gamma$   
		
		\textit{// $h_0(x)=c(\phi(x))$ is a composition of a prediction function $c$ and a representation function $\phi$ 
		}
		\STATE Fine-tune $h_0$ for $N$ epochs using the objective: 
		\begin{align}
		    \label{obj:rm}
		    	 \minimize_{h} \mathbb{E}_{(x,y)\in \Dtr } [\ell(h(x),y)] 
		    	+  \gamma \cdot \mathbb{E}_{(x,y)\in R } [\ell(h(x),y)] + \alpha \cdot d(p^\phi_{\Dtr}, p^\phi_{U_X})
		\end{align}
		where $d(p^\phi_{\Dtr}, p^\phi_{U_X})$ is the distance between the distribution of $\phi(x)$ on $\Dtr$ and that on $U_X$. 
		\STATE Use the $N$ checkpoint models at the end of each training epoch as the model ensemble $\{ h_i \}_{i=1}^N$. 
		\ENSURE The ensemble of models $\{h_i\}_{i=1}^N$ 
	\end{algorithmic}
\end{algorithm}

\section{Experiments}
\label{sec:experiment}


We perform experiments for unsupervised accuracy estimation and error detection tasks on 59 pairs of training-test datasets from five dataset categories, including image classification and sentiment classification datasets. In summary, our findings are: \textbf{(1)} Our method achieves state-of-the-art results on both accuracy estimation and error detection tasks than the existing methods. \textbf{(2)} Both ensemble and self-training techniques have positive effects on the tasks and it is easy to pick suitable hyper-parameters for our algorithms. \textbf{(3)} Empirical results show that the conditions made in our analysis hold approximately.

\subsection{Setup}


We briefly describe the experiment setup here. The detailed setup can be found in Appendix~\ref{sec:detail-experiment-setup}. 

\mypara{Dataset. } The task needs a pair of training-test datasets $\Dtr$ and $U_X$. We use five dataset categories, each containing multiple training-test dataset pairs. Specifically, we use the following dataset categories: Digits (including MNIST~\citep{lecun1998mnist}, MNIST-M~\citep{ganin2016domain}, SVHN~\citep{netzer2011reading}, USPS~\citep{hull1994database}), Office-31~\citep{saenko2010adapting}, CIFAR10-C~\citep{krizhevsky2009learning}, iWildCam~\citep{beery2020iwildcam} and Amazon Review~\citep{blitzer7domain}. Digits has 12 dataset pairs, Office-31 has 6, CIFAR10-C has 19, iWildCam has 10 and Amazon Review has 12. 

\mypara{Evaluation Metrics. } We use absolute estimation error for accuracy estimation and use F1 score for error detection.

\mypara{Our Models. } On each dataset category, we design a neural network architecture for the DANN training algorithm~\citep{ganin2016domain}, which is named DANN-arch. It contains an encoder, a predictor branch and a discriminator branch. For $\ens_\textrm{RM}$, we use the DANN-arch for the ensemble $\ensmodel$ and pre-train the initial model $h_0$ on $\Dtr$ and $U_X$ using DANN algorithm. For $\ensmodel$ in $\ens_\textrm{RI}$, we also use the DANN-arch. The model $f$ is pre-trained on $D$ and we mainly consider two kinds of architectures for it: one is the DANN-arch (i.e., the model $f$ shares the same architecture as the check model $h$); the other is a typical deep neural network (DNN). 


\mypara{Hyper-parameters. } The hyper-parameters $T$, $N$ and $\gamma$ can be easily selected since a broad range of values can lead to good results. Based on our observation, larger $T$ and $N$ can lead to better results. In our experiments, we set $T=5$ and $N=5$ by considering the computational cost (on Amazon Review, we set $N=20$). We set $\gamma=0.1$ and set $\alpha$ following the domain adaptation methods. 


\mypara{Baselines. } For accuracy estimation, we consider Proxy Risk~\citep{chuang2020estimating}, Average Confidence (Avg Conf)~\citep{elsahar2019annotate}, and Ensemble Average Confidence (Ens Avg Conf). For error detection, we consider Proxy Risk, Maximum Softmax Probability (MSP)~\citep{hendrycks2016baseline}, and Trust Score~\citep{jiang2018trust}. Although Proxy Risk and our method share some similar ideas such as the use of check models, disagreement and representation matching, they have some major differences in the key ideas, training objectives, and the implementation of training objectives (refer to Appendix~\ref{sec:comparison-with-proxy-risk} for a detailed discussion).

\subsection{Results}

\begin{table}
\centering
\begin{adjustbox}{width=0.9\columnwidth,center}
		\begin{tabular}{l|l|c|c|l|c|c}
			\toprule
			Task & \multicolumn{3}{|c}{Accuracy Estimation} & \multicolumn{3}{|c}{Error Detection} \\ \hline
			Metric & \multicolumn{3}{c|}{Absolute Estimation Error $\downarrow$} & \multicolumn{3}{c}{F1 score $\uparrow$} \\ \hline
			\multirow{2}{0.12\linewidth}{Dataset} & \multirow{2}{0.12\linewidth}{Method} & \multicolumn{2}{c|}{Model $f$} & \multirow{2}{0.12\linewidth}{Method} & \multicolumn{2}{c}{Model $f$} \\ \cline{3-4} \cline{6-7} 
			&  & Typical DNN  & DANN-arch &  & Typical DNN  & DANN-arch \\ \hline \hline 
	         \multirow{4}{0.12\linewidth}{Digits} & Avg Conf & 0.404$\pm$0.180 & 0.350$\pm$0.230  & MSP & 0.467$\pm$0.195 & 0.485$\pm$0.209  \\ 
	       & Ens Avg Conf  & 0.337$\pm$0.229 & 0.246$\pm$0.230 & Trust Score & 0.496$\pm$0.195 & 0.484$\pm$0.187 \\ 
	       & Proxy Risk  & 0.085$\pm$0.142 & 0.095$\pm$0.181 &  Proxy Risk & 0.844$\pm$0.118 & 0.796$\pm$0.155 \\
	       & Ours (RI) & 0.164$\pm$0.218 & 0.087$\pm$0.077 & Ours (RI) & 0.698$\pm$0.235 & 0.701$\pm$0.126 \\
	       & Ours (RM) & {\bf 0.023}$\pm$0.020 & {\bf 0.024}$\pm$0.022 & Ours (RM) & {\bf 0.881}$\pm$0.084 & {\bf 0.841}$\pm$0.112 \\ \hline 
	      \multirow{4}{0.12\linewidth}{Office-31}& Avg Conf  & 0.054$\pm$0.044 & 0.259$\pm$0.134  & MSP & 0.281$\pm$0.266 & 0.584$\pm$0.128  \\ 
	       & Ens Avg Conf & 0.080$\pm$0.041 & 0.281$\pm$0.136 & Trust Score & 0.401$\pm$0.240 & 0.559$\pm$0.143 \\ 
	       & Proxy Risk & 0.033$\pm$0.012 & 0.042$\pm$0.034 &  Proxy Risk & 0.605$\pm$0.177 & 0.629$\pm$0.140 \\
	       & Ours (RI) & 0.051$\pm$0.038 & 0.044$\pm$0.031 & Ours (RI) & 0.715$\pm$0.124 & 0.770$\pm$0.027 \\
	       & Ours (RM)  & {\bf 0.029}$\pm$0.021 & {\bf 0.018}$\pm$0.023 & Ours (RM) & {\bf 0.767}$\pm$0.052 & {\bf 0.790}$\pm$0.087 \\ \hline 
	       \multirow{4}{0.12\linewidth}{CIFAR10-C} & Avg Conf & 0.353$\pm$0.175 & 0.369$\pm$0.176 & MSP & 0.505$\pm$0.043 & 0.550$\pm$0.043 \\ 
	       & Ens Avg Conf & 0.237$\pm$0.144 & 0.237$\pm$0.133 & Trust Score & 0.494$\pm$0.045 & 0.568$\pm$0.060 \\ 
	       & Proxy Risk & 0.053$\pm$0.070 & 0.052$\pm$0.070 &  Proxy Risk & 0.850$\pm$0.107 & 0.843$\pm$0.101 \\
	       & Ours (RI) & 0.149$\pm$0.089 & 0.197$\pm$0.115 & Ours (RI) & 0.654$\pm$0.064 & 0.568$\pm$0.063 \\
	       & Ours (RM) & {\bf 0.022}$\pm$0.009 & {\bf 0.029}$\pm$0.012 & Ours (RM) & {\bf 0.872}$\pm$0.083 & {\bf 0.860}$\pm$0.091  \\ \hline 
	       \multirow{4}{0.12\linewidth}{iWildCam} & Avg Conf & 0.388$\pm$0.045 & 0.395$\pm$0.043 & MSP & 0.692$\pm$0.006 & 0.741$\pm$0.009  \\ 
	       & Ens Avg Conf & 0.177$\pm$0.025 & 0.158$\pm$0.020 & Trust Score & 0.717$\pm$0.009 & 0.737$\pm$0.010 \\ 
	       & Proxy Risk & 0.119$\pm$0.043 & 0.094$\pm$0.036 &  Proxy Risk & 0.755$\pm$0.038 & 0.773$\pm$0.039 \\
	       & Ours (RI) & {\bf 0.015}$\pm$0.008 & {\bf 0.007}$\pm$0.004 & Ours (RI) & 0.792$\pm$0.013 & 0.806$\pm$0.012 \\
	       & Ours (RM) & 0.035$\pm$0.022 & 0.026$\pm$0.024 & Ours (RM) & {\bf 0.796}$\pm$0.014 & {\bf 0.809}$\pm$0.010 \\  \hline
	       \multirow{4}{0.12\linewidth}{Amazon Review} & Avg Conf & 0.290$\pm$0.043 & 0.310$\pm$0.045 & MSP & 0.420$\pm$0.022 & 0.218$\pm$0.031  \\ 
	       & Ens Avg Conf & 0.229$\pm$0.038 & 0.217$\pm$0.036 & Trust Score & 0.414$\pm$0.024 & 0.237$\pm$0.044 \\ 
	       & Proxy Risk & 0.021$\pm$0.014 & 0.037$\pm$0.076 &  Proxy Risk & 0.417$\pm$0.042 & 0.434$\pm$0.046 \\
	       & Ours (RI) & 0.065$\pm$0.037 & 0.062$\pm$0.051 & Ours (RI) & 0.384$\pm$0.032 & {\bf 0.453}$\pm$0.037 \\
	       & Ours (RM) & {\bf 0.018}$\pm$0.010 & {\bf 0.022}$\pm$0.011 & Ours (RM) & {\bf 0.426}$\pm$0.036 & 0.440$\pm$0.037 \\ 
		   \bottomrule
		\end{tabular}
\end{adjustbox}
 	\captionof{table}{\small Results for unsupervised accuracy estimation and error detection. For typical DNN, We use CNN-BN for Digits, ResNet50 for Office-31, ResNet34 for CIFAR10-C, ResNet50 for iWildCam, and Fully Connected Network for Amazon Review. We show the mean and standard deviation of absolute estimation error and F1 score (mean$\pm$std). The numbers are calculated over the training-test dataset pairs in each dataset category. {\bf Bold} numbers are the superior results. } 
 	\label{tab:main-results}
 	\vspace{-0.5cm}
\end{table}

\mypara{Performance for Accuracy Estimation.}
The results in Table~\ref{tab:main-results} show that our method (with $\ens_\textrm{RM}$) achieves significantly better results across various dataset categories compared to existing methods. Specifically, on Digits and CIFAR10-C, it outperforms current state-of-the-art method Proxy Risk significantly (e.g. reduce the error by $>40$\%). On Office-31 and Amazon Review, it also outperforms the others and has a large advantage on the pre-trained models using DANN-arch. 

We would like to emphasize the results on the challenging dataset iWildCam. Our method (with either $\ens_\textrm{RI}$ or $\ens_\textrm{RM}$) outperforms the other methods significantly (e.g., reduce the error by $>70$\%), and the instantiation with $\ens_\textrm{RI}$ gets the best results. 
Note that on iWildCam, the label distribution of the test data is imbalanced and different from that of the training data. In such a case, the representation matching technique will fail since the representations of the two domains may be misaligned.
Thus, the performance of Proxy Risk becomes worse, as it relies on representation matching. Our method with $\ens_\textrm{RM}$ also uses DANN in the ensemble method, but performs significantly better than Proxy Risk, since the diversity helps satisfy condition \textbf{(C)} though the representation matching fails to improve accuracy there. This shows that our ensemble and self-training techniques could alleviate the drawbacks of representation matching in such cases. Furthermore, our method with $\ens_\textrm{RI}$ achieves even better results, which demonstrates the flexibility and effectiveness of our framework. 


\newcommand{\scalefactor}{0.9}
\begin{figure*}[h!]
    \centering
    \begin{subfigure}{\scalefactor\linewidth}
	    \centering
		\includegraphics[width=\linewidth]{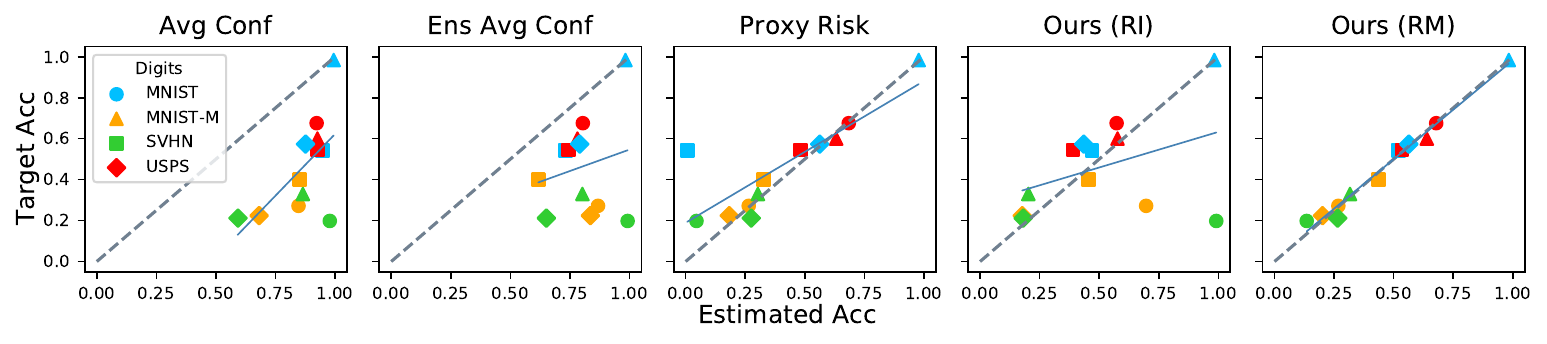}
	\end{subfigure}
    \begin{subfigure}{\scalefactor\linewidth}
		\centering
		\includegraphics[width=\linewidth]{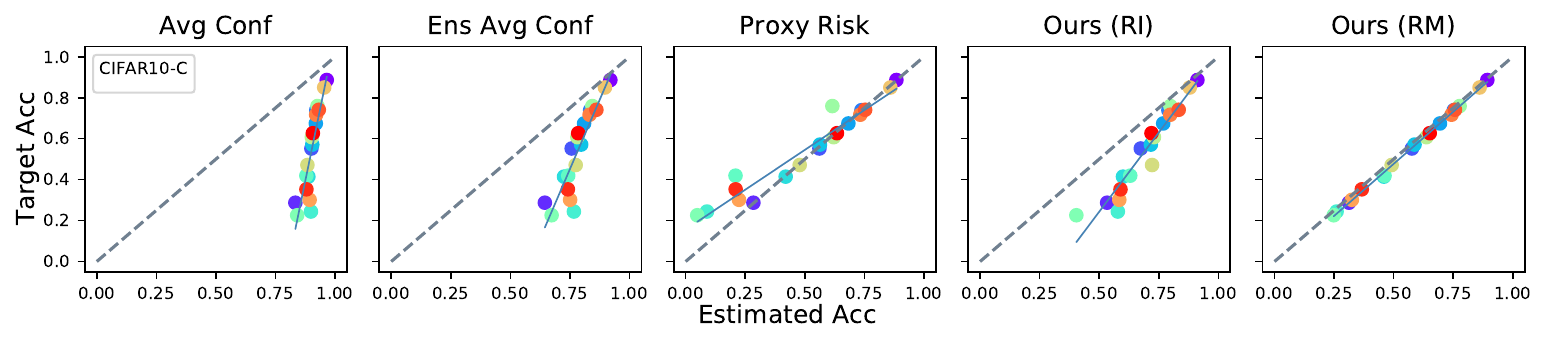}
	\end{subfigure} 
	\caption{\small Accuracy estimation detailed results for each dataset pair from the dataset category Digits and CIFAR10-C (results for other categories are in the Appendix~\ref{sec:plot-accuracy-estimation}). We use typical DNN as the architecture for the model $f$. We use symbols to represent training datasets and colors to represent test datasets. For CIFAR10-C, there is only one training dataset with multiple test datasets. The dashed line represents perfect prediction (target accuracy = estimated accuracy). Points beneath (above) the dashed line indicate overestimation (underestimation). The solid lines are regression lines of the results. 
	}
	\label{fig:accuracy-estimation-main}
	\vspace{-0.3cm}
\end{figure*}

The results for each dataset pair from Digits and CIFAR10-C are plotted in Figure~\ref{fig:accuracy-estimation-main}. The plots show that proxy risk tends to underestimate the accuracy. This is because proxy risk maximizes disagreement which can overly suppress the accuracy. While the average confidence methods tend to overestimate the accuracy, because the model $f$ tends to be overconfident on the data with dataset shift, even when this issue gets rectified in the ensemble average confidence method. 
In comparison, our method with $\ens_\textrm{RM}$ exhibits a clear advantage. The full results for all dataset categories and pre-trained models in Appendix~\ref{sec:plot-accuracy-estimation} show a similar trend.

\mypara{Performance for Error Detection.}
Table~\ref{tab:main-results} shows that our method with $\ens_\textrm{RM}$ outperforms existing methods on all dataset categories. Specifically, our method improves the F1 score by at least 4.4\% on Digits, by at least 25.6\% on Office-31, by at least 2.0\% on CIFAR10-C, by at least 4.7\% on iWildCam and by at least 1.4\% on Amazon Review. This shows the advantages of our method to identify error points in the unlabeled test dataset.  


\mypara{Ablation Studies.}
To study the effect of using ensembles, we vary the ensemble size $N$ ($N=1$ means one single $h$, without ensemble) in our method with $\ens_\textrm{RM}$. Similarly, for self-training, we vary the self-training iteration number $T$ ($T=1$ means no self-training).  Figure~\ref{fig:ablation-study} shows their effect on the average F1 score: both ensemble and self-training techniques have positive effects for identifying error points and thus also for accuracy estimation. Moreover, increasing $T$ and $N$ can lead to further improvement. In addition to $N$ and $T$, we similarly exam the effect of the last hyper-parameter $\gamma$. The figure shows that a wide range of $\gamma$ can lead to good results, so it is easy to pick a suitable $\gamma$.

\newcommand{\scalefactortwo}{0.25}
\begin{figure}[t!]
    \centering
    \begin{subfigure}{\scalefactortwo\linewidth}
	    \centering
		\includegraphics[width=\linewidth]{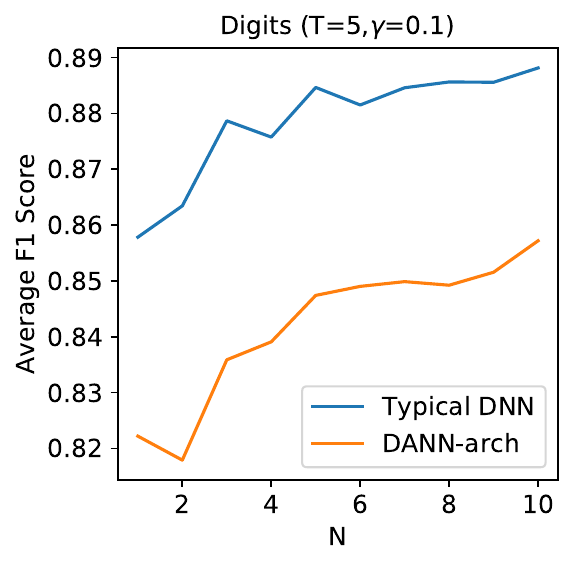}
	\end{subfigure}
	\begin{subfigure}{\scalefactortwo\linewidth}
		\centering
		\includegraphics[width=\linewidth]{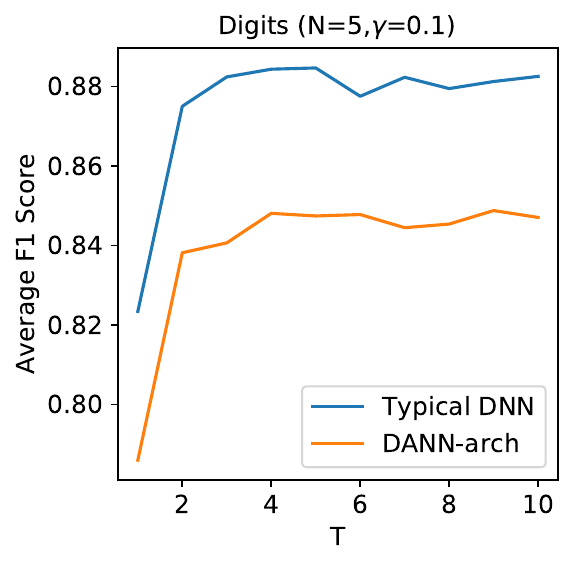}
	\end{subfigure} 
    \begin{subfigure}{\scalefactortwo\linewidth}
		\centering
		\includegraphics[width=\linewidth]{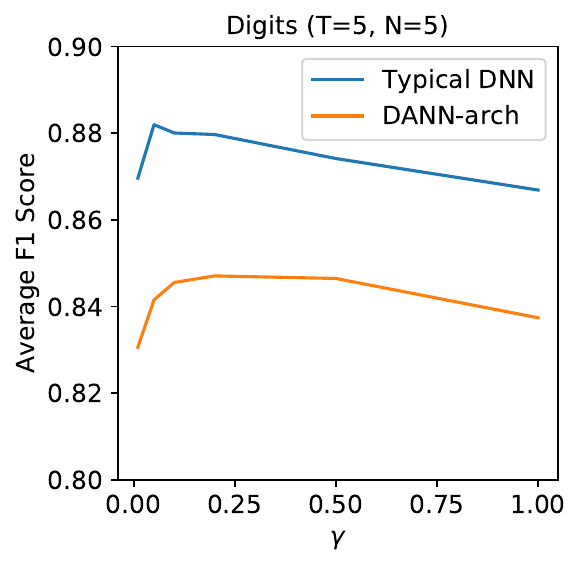}
	\end{subfigure} 
	\caption{\small Ablation study for the effect of ensemble and self-training techniques on Digits (an additional ablation study on CIFAR10-C is in Appendix~\ref{sec:extra-ablation-studies}). $N$ is the number of models in the ensemble, $T$ is the number of self-training iterations, and $\gamma$ is the weighting parameter for the loss term on the pseudo-labeled data. The ensemble training algorithm we use is $\ens_\textrm{RM}$. 
	}
	\label{fig:ablation-study}
	\vspace{-0.3cm}
\end{figure}


\mypara{Validating the Theoretical Analysis.}
Our analysis relies on three conditions \textbf{(A)(B)(C)} stated in Section~\ref{sec:analysis}.  
Our experiments in Appendix~\ref{app:validate-theory} show that empirically they are roughly satisfied. For example, for typical DNN $f$ on MNIST$\rightarrow$MNIST-M, $\tilde{\nu}=3.39\%$, $\tilde{\gamma} = 0.73\%$ while $\sigma_L^2 = 26.58\%$. We note that our theoretical analysis is for formalizing our intuition and is for the worst case. Even when our assumptions are not fully satisfied on some complex datasets, our method can still have empirical performance better than the error bound.

\mypara{Evaluating $f$ with Various Architectures. } We evaluate different architectures for the model $f$ on Digits and show the results in Appendix~\ref{app:evaluate-various-arch}. The results demonstrate that our method consistently outperforms other methods on pre-trained models with various deep learning model architectures.   

\mypara{Analysis on Target Accuracy. } In Appendix~\ref{sec:analysis-target-acc}, we show that the target accuracy of our ensemble check models can be higher or lower than that of the model $f$ depending on the datasets and in both cases, our method can achieve good performance. 

\mypara{Analysis on Proxy Risk. } In Appendix~\ref{sec:analysis-proxy-risk}, we show that the disagreement maximization is crucial for Proxy Risk and combining Proxy Risk with ensemble doesn't lead to better results than our method.



\section{Discussions}
\label{sec:discussion}

While our framework is general and flexible when combined with different ensemble methods, and it is easy to design ensemble methods satisfying the success conditions, we note that some prior knowledge about the data is needed to achieve the best performance. For example, iWildCam has imbalanced classes which hurt representation matching and thus $\mathcal{T}_{\textrm{RM}}$, while $\mathcal{T}_{\textrm{RI}}$ is more suitable for such data.  
Different instantiations thus have their own limitations. 
For $\mathcal{T}_{\textrm{RM}}$, matching failure can cause errors on $f$'s correct points, and too strong matching can decrease the diversity since the models are too restricted.
For $\mathcal{T}_{\textrm{RI}}$, it does not attempt to improve the accuracy on the test inputs and relies only on diversity to satisfy condition \textbf{(C)}. Then it can have worse performance than $\mathcal{T}_{\textrm{RM}}$ on test data with connection to the training data that can be exploited.
The success conditions we identified can guide the design of different instantiations. In this paper we focus on ensembles of deep learning models, since they readily satisfy the conditions. However, other ensemble methods with other types of models may also be useful. We leave the exploration for future work.

\section*{Acknowledgments}
The authors would like to thank Dr. Ankur Taly for his valuable comments. The work is partially supported by Air Force Grant FA9550-18-1-0166, the National Science Foundation (NSF) Grants CCF-FMitF-1836978, IIS-2008559, SaTC-Frontiers-1804648, CCF-2046710 and CCF-1652140, and ARO grant number W911NF-17-1-0405. Jiefeng Chen and Somesh Jha are partially supported by the DARPA-GARD problem under agreement number 885000. 

	\bibliographystyle{plain}
	\bibliography{ref}

\section*{Checklist}


\begin{enumerate}

\item For all authors...
\begin{enumerate}
  \item Do the main claims made in the abstract and introduction accurately reflect the paper's contributions and scope?
    \answerYes{}
  \item Did you describe the limitations of your work?
    \answerYes{See Section~\ref{sec:discussion}.}
  \item Did you discuss any potential negative societal impacts of your work?
    \answerNA{}
  \item Have you read the ethics review guidelines and ensured that your paper conforms to them?
    \answerYes{}
\end{enumerate}

\item If you are including theoretical results...
\begin{enumerate}
  \item Did you state the full set of assumptions of all theoretical results?
    \answerYes{See Section~\ref{sec:analysis} and Appendix~\ref{app:additional_analysis}.}
	\item Did you include complete proofs of all theoretical results?
    \answerYes{See Appendix~\ref{app:additional_analysis}.}
\end{enumerate}

\item If you ran experiments...
\begin{enumerate}
  \item Did you include the code, data, and instructions needed to reproduce the main experimental results (either in the supplemental material or as a URL)?
    \answerYes{See the supplemental material. }
  \item Did you specify all the training details (e.g., data splits, hyperparameters, how they were chosen)?
    \answerYes{See Appendix~\ref{sec:detail-experiment-setup}. }
	\item Did you report error bars (e.g., with respect to the random seed after running experiments multiple times)?
    \answerYes{See Appendix~\ref{sec:multiple-runs}. }
	\item Did you include the total amount of compute and the type of resources used (e.g., type of GPUs, internal cluster, or cloud provider)?
    \answerYes{See Appendix~\ref{sec:detail-experiment-setup}. }
\end{enumerate}

\item If you are using existing assets (e.g., code, data, models) or curating/releasing new assets...
\begin{enumerate}
  \item If your work uses existing assets, did you cite the creators?
    \answerYes{See References. }
  \item Did you mention the license of the assets?
    \answerNA{}
  \item Did you include any new assets either in the supplemental material or as a URL?
    \answerNA{}
  \item Did you discuss whether and how consent was obtained from people whose data you're using/curating?
    \answerNA{}
  \item Did you discuss whether the data you are using/curating contains personally identifiable information or offensive content?
    \answerNA{}
\end{enumerate}

\item If you used crowdsourcing or conducted research with human subjects...
\begin{enumerate}
  \item Did you include the full text of instructions given to participants and screenshots, if applicable?
    \answerNA{}
  \item Did you describe any potential participant risks, with links to Institutional Review Board (IRB) approvals, if applicable?
    \answerNA{}
  \item Did you include the estimated hourly wage paid to participants and the total amount spent on participant compensation?
    \answerNA{}
\end{enumerate}

\end{enumerate}

\newpage
\appendix
\begin{center}
	\textbf{\LARGE Supplementary Material}
\end{center}

 \begin{center}
	\textbf{\large Detecting Errors and Estimating Accuracy on Unlabeled Data with Self-training Ensembles}
\end{center}

In Section~\ref{app:further-discussion}, we have some further discussions to clarify some important points about our work. In Section~\ref{app:additional_analysis}, we present our theoretical results, their proofs and the discussions on the benefit of using ensembles. In Section~\ref{sec:experiment-details}, we describe the detailed settings for the experiments and also present some additional experimental results. 

\section{Further Discussions} 
\label{app:further-discussion}

\subsection{Comparisons with Proxy Risk}
\label{sec:comparison-with-proxy-risk}

Our work and proxy risk work have the following similarities: 
\begin{enumerate}
    \item Both proxy risk and our framework train check models to estimate the accuracy of the pre-trained model $f$ on the unlabeled test dataset $U$ and also identify misclassified points in $U$;
    \item Both proxy risk and one instance of our framework (Algorithm~\ref{alg:ensemble_rm}) use domain-invariant representations (DIR) to improve the accuracy of the check models on the target domain. 
\end{enumerate}

However, they also have the following differences:
\begin{enumerate}
    \item The key ideas are different. The idea of Proxy Risk is to find a check model with maximum disagreement in the good hypothesis class (the set of hypotheses that achieve small DIR loss). Our idea is to {\bf increase} the disagreement on the {\bf mis-classified points} in each iteration of self-training, and mis-classified data points are identified by either accurate prediction or diversity using ensemble;
    \item The training objectives are different: 1) Proxy Risk's objective is applied on the whole unlabeled test set $U$. Ours is only on the selected subset of $U$ (i.e., the currently identified mis-classified points); 2) The terms in the objective to encourage disagreement are different. Proxy risk tries to maximize the disagreement between the model $f$ and the check model $h$ directly on the entire unlabeled test set $U$ while maintaining a small DIR loss (corresponding to the term $-\mathbb{E}_{x \in U_X}\ell(f(x),h'(x))$ in their objective). In contrast, our method encourages disagreement via fitting the check models to the pseudo-labeled dataset $R$ (corresponding to the term $\mathbb{E}_{(x,y)\in R}\ell(h(x),y)$ in our objective). The two terms $\mathbb{E}_{(x,y)\in R}\ell(h(x),y)$ and $-\mathbb{E}_{x \in U_X}\ell(f(x),h'(x))$ are different. For multi-class classification, disagreeing with the pre-trained model is not equivalent to agreeing with the pseudo-labels. This is because the check model can predict some labels different from both the pre-trained model's prediction $f(x)$ and the pseudo-label $\tilde{y}_x$. For example, suppose there are 3 classes, and suppose the pre-trained model predicts $f(x)$= class 0,  the pseudo-label $\tilde{y}_x$=class 1. Our term is ensuring $h(x)$ to be class 1, while their term is ensuring $h'(x)$ to be either class 1 or class 2. Thus, our objective is more specific, and this leads to increasing disagreement and also potentially better prediction from the ensemble, which then leads to the success of the self-training; 
    \item The implementation of the objectives are also different. The proxy risk method uses $L_2$ norm:  $-\mathbb{E}_{x \sim U_X} [\ell(f(x), h'(x))]=\mathbb{E}_{x \sim U_X} [ - \| \bar{h'}(x) - \bar{f}(x) \|_{2} ]$ \footnote{See their code: \url{https://github.com/chingyaoc/estimating-generalization}} while our method uses cross entropy loss $\mathbb{E}_{(x,y)\in R}[\ell(h(x),y)]=\mathbb{E}_{(x,y)\in R} [-\log \bar{h}(x)_{y}]$. Here, $\bar{h'}$ is the softmax output of $h$ (similarly for $\bar{f}$ and $\bar{h}$).
\end{enumerate}

\subsection{Comparisons with Standard Self-training and Ensemble}
\label{sec:comparison-with-standard-self-training-ensemble}

Our self-training ensembles method is different from the standard self-training and ensemble. The differences are:
\begin{enumerate}
    \item The goals are different. Standard self-training+ensemble aims to get accurate predictions. Ours is to increase disagreement on mis-classified points by either getting accurate predictions or getting diversity. 
    \item The techniques are also different, due to the different goals. (1) For each identified mis-classified point, we assign a pseudo-label that is different from the prediction of the pre-trained model, but we do not need the pseudo-labels to be correct, as our goal is disagreement. In contrast, standard methods typically hope the pseudo-labels are correct. (2) We only assign pseudo-labels to the currently identified set of mis-classified points, while standard methods typically assign pseudo-labels for all unlabeled points. This is because we want disagreement on mis-classified points rather than on all points, and we would like to assign pseudo-labels to be different from the prediction of the pre-trained model on mis-classified points.
\end{enumerate}

\section{Complete Proofs and Discussions for Section~\ref{sec:analysis}}
\label{app:additional_analysis}
 
We first present the proof for our main theorem, and then provide some more discussion on the benefit of using ensembles.

\subsection{Provable Guarantees of the Framework} \label{app:relaxed_analysis}

We first prove a technical lemma for constructing $R$ and then use it to prove the theorem. 

First recall the key notions. 

Suppose on points where $f$ is correct, the ensemble models are also approximately correct.   
Let $\nu$ denote the average probability of $h \sim \ens$ making error on the test points where the pre-trained model $f$ is correct:
\begin{align}
    \nu := \Pr_{(x,y_x) \sim U, h \sim \ens} \ [h(x) \neq y_x | f(x) = y_x].
\end{align} 

Suppose the ensemble training with regularization on the pseudo-labeled data $R$ will make the ensemble models disagree with $f$ on $R_X$. Let $\gamma$ denote the average probability of $h \sim \ens$ agreeing with $f$ on the test points in $R_X$:
\begin{align}
    \gamma := & \Pr_{x \sim  R_X, h \sim \ens} \quad \{ h(x) = f(x) \}. 
\end{align}

Suppose $G_X$ is the points in $W_X \setminus R_X$ on which the ensemble agree with the true label $y_x$ with more than $1-\nu$ probability, i.e.,
\begin{align}
    G_X := \{ x \in W_X \setminus R_X: \Pr_{h \sim \ens}\{ h(x) = y_x \} \ge 1- \nu  \}.
\end{align} 
That is, $G_X$ are the points where the ensemble will have correct prediction with high confidence.
We would like the ensemble to have large diversity on the remaining points in $W_X \setminus R_X$. 
Define the diversity there to be 
\begin{align}
    B_X & := W_X \setminus (R_X \cup G_X), 
    \\
    \sigma^2 & := \mathbb{E}[\sigma_x^2 | x \in B_X].
\end{align}

\begin{lemma} \label{lem:constructR} 
Define 
\begin{align}
    B_\eta := \min\{\sigma^2, 1- \nu^2 \}.
\end{align}
For any $\eta \in (0, B_\eta)$, let $\tau = \sqrt{1 - \eta}$, 
and
\begin{align}
    R'_X :=& \left\{ x \in U_X: \Pr_{h \sim \ens}\{ h(x) = f(x) \} <\tau  \right\}.
\end{align}
Then we have 
\begin{align*} 
   & |R'_X \cap (U_X \setminus W_X)| \le  \frac{\nu}{1 - \tau} |U_X \setminus W_X|, \textrm{~~and~~} 
   \\
   & (1-\frac{\gamma}{\tau})|R_X| \leq |R_X \cap R'_X|, G_X \subseteq R'_X, |R'_X \cap B_X|  \ge \frac{\sigma^2 - \eta}{1 - \eta}|B_X|.
\end{align*}
\end{lemma}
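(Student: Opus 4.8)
\textbf{Proof proposal for Lemma~\ref{lem:constructR}.}

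The plan is to establish the four claims about $R'_X$ separately, each one amounting to a counting/Markov-type argument applied to a suitable indicator random variable over the relevant subset of $U_X$, together with the averaging definitions of $\nu$, $\gamma$, and $\sigma^2$.

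First, for the bound $|R'_X \cap (U_X \setminus W_X)| \le \frac{\nu}{1-\tau}|U_X \setminus W_X|$: note that $U_X \setminus W_X$ is exactly the set where $f(x) = y_x$, so for such $x$ the event $h(x) \neq f(x)$ coincides with $h(x) \neq y_x$. By Definition~\ref{def:average_error}, $\mathbb{E}_x[\Pr_{h}\{h(x)\neq f(x)\} \mid x \in U_X \setminus W_X] = \nu$. A point $x \in U_X \setminus W_X$ lies in $R'_X$ iff $\Pr_h\{h(x) = f(x)\} < \tau$, i.e. $\Pr_h\{h(x) \neq f(x)\} > 1-\tau$. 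By Markov's inequality applied to the nonnegative quantity $\Pr_h\{h(x)\neq f(x)\}$ averaged over $x \in U_X \setminus W_X$, the fraction of such $x$ exceeding the threshold $1-\tau$ is at most $\nu/(1-\tau)$, giving the claim. The bound $(1-\frac{\gamma}{\tau})|R_X| \le |R_X \cap R'_X|$ is dual: for $x \in R_X$, $x \notin R'_X$ means $\Pr_h\{h(x)=f(x)\} \ge \tau$, and since $\mathbb{E}_{x \in R_X}[\Pr_h\{h(x)=f(x)\}] = \gamma$ by Definition~\ref{def:error_pseudo}, Markov gives $|R_X \setminus R'_X|/|R_X| \le \gamma/\tau$, hence $|R_X \cap R'_X| \ge (1-\gamma/\tau)|R_X|$. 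The inclusion $G_X \subseteq R'_X$ is immediate once we know $1-\nu < \tau$ (equivalently $\nu > 1-\tau$): for $x \in G_X$ we have $\Pr_h\{h(x)=y_x\} \ge 1-\nu$, but $x \in W_X$ means $f(x) \neq y_x$, so $\Pr_h\{h(x)=f(x)\} \le 1 - \Pr_h\{h(x)=y_x\} \le \nu < \tau$; I should double-check that the hypothesis $\eta < B_\eta \le 1-\nu^2$ indeed yields $\tau = \sqrt{1-\eta} > \nu$, which is just $1-\eta > \nu^2$.

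The last and most delicate claim is $|R'_X \cap B_X| \ge \frac{\sigma^2 - \eta}{1-\eta}|B_X|$. Here we only know the \emph{diversity} $\sigma^2 = \mathbb{E}_x[\sigma_x^2 \mid x \in B_X]$ with $\sigma_x^2 = \Pr_{h_1,h_2}[h_1(x)\neq h_2(x)]$, and we want to conclude that on many points of $B_X$ the ensemble disagrees with the single fixed label $f(x)$ with probability exceeding $1-\tau = 1-\sqrt{1-\eta}$. The key pointwise fact is a relation between $\sigma_x^2$ and $\max_j \Pr_h\{h(x)=j\}$: writing $p_j = \Pr_h\{h(x)=j\}$, we have $\sigma_x^2 = 1 - \sum_j p_j^2 \le 1 - (\max_j p_j)^2$, so $\max_j p_j \le \sqrt{1-\sigma_x^2}$, and in particular $\Pr_h\{h(x)=f(x)\} = p_{f(x)} \le \sqrt{1-\sigma_x^2}$. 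Thus $x \in R'_X$ whenever $\sqrt{1-\sigma_x^2} < \tau = \sqrt{1-\eta}$, i.e. whenever $\sigma_x^2 > \eta$. So $B_X \setminus R'_X \subseteq \{x \in B_X : \sigma_x^2 \le \eta\}$. Now bound the measure of the latter set: since $\sigma_x^2 \le 1$ always and averages to $\sigma^2$ on $B_X$, if a fraction $\beta$ of $B_X$ has $\sigma_x^2 \le \eta$ then $\sigma^2 \le \beta \eta + (1-\beta)\cdot 1$, which rearranges to $\beta \le \frac{1-\sigma^2}{1-\eta}$. Hence $|R'_X \cap B_X| \ge (1-\beta)|B_X| \ge \frac{\sigma^2 - \eta}{1-\eta}|B_X|$, as desired.

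The main obstacle is the last step: getting the clean pointwise inequality $\Pr_h\{h(x)=f(x)\} \le \sqrt{1-\sigma_x^2}$ from the collision probability, and then choosing the right "reverse Markov" bound on $\{\sigma_x^2 \le \eta\}$ using the trivial upper bound $\sigma_x^2 \le 1$. I would also want to be careful about the range restriction $\eta \in (0, B_\eta)$ where $B_\eta = \min\{\sigma^2, 1-\nu^2\}$: the $\sigma^2$ part ensures $\sigma^2 - \eta > 0$ so the last bound is non-vacuous, and the $1-\nu^2$ part ensures $\tau > \nu$ so that $G_X \subseteq R'_X$; both should be verified explicitly. Everything else is bookkeeping with Markov's inequality and set inclusions, and the theorem then follows by iterating this lemma (each application producing a strictly larger pseudo-labeled set until the diversity-driven progress term $\frac{\sigma_L^2}{4}-\delta$ is exhausted after $\lceil 1/\delta \rceil$ rounds).
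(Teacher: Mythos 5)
Your proposal is correct and follows essentially the same route as the paper's proof: Markov's inequality on $\Pr_h\{h(x)\neq f(x)\}$ over $U_X\setminus W_X$ and on $\Pr_h\{h(x)=f(x)\}$ over $R_X$, the direct inclusion $G_X\subseteq R'_X$ via $\Pr_h\{h(x)=f(x)\}\le\nu<\tau$, and for $B_X$ the collision-probability bound $\Pr_h\{h(x)=f(x)\}\le\sqrt{1-\sigma_x^2}$ combined with the reverse-Markov estimate on $\{x:\sigma_x^2\le\eta\}$. The only blemish is the passing phrase ``once we know $1-\nu<\tau$,'' which should read $\nu<\tau$ — the condition you in fact verify from $\eta<1-\nu^2$ — so the argument itself is fine.
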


\begin{proof}
Consider $x \in U_X - W_X$. We have
\begin{align}
    & \quad \Pr_{(x,y_x) \sim U}\left\{ \Pr_{h \sim \ens}\{ h(x) = f(x)\} < \tau | f(x) = y_x \right \} 
    \\
    & = \Pr_{(x,y_x) \sim U}\left\{ \Pr_{h \sim \ens}\{ h(x) \neq y_x \} \ge 1-\tau | f(x) = y_x \right\} 
    \\
    & \leq \frac{\nu}{1 - \tau}.
\end{align}
So only $\frac{\nu}{1 - \tau}$ fraction of the data points in $U_X \setminus W_X$ will be put into $R_X$, proving the first statement.

Now, consider $x\in W_X$. 
For $x \in R_X$, we have  
\begin{align}
    & \quad \Pr_{x \sim R_X}\left\{ \Pr_{h \sim \ens}\{ h(x) = f(x)\} \geq \tau \right \} \leq \frac{\gamma}{\tau}.
\end{align}
so more than $1-\frac{\gamma}{\tau}$ fraction of the data points in $R_X$ will be put into $R'_X$. 

For $x \in G_X$, since $\eta < 1 - \nu^2$, we have $\nu < \tau$. Note that $f(x) \neq y_x$, thus $x$ will be put into $R'_X$.  
In the following we consider $x\in B_X = W_X \setminus (R_X \cup G_X)$. 

We first show that a significant fraction of $B_X$ has variance larger than $\eta$. 
Since $\sigma^2_x \in [0,1]$, 
\begin{align}
    \sigma^2 &  = \mathbb{E}[\sigma_x^2 | x \in B_X  ]
    \\
    & \le \Pr\{\sigma^2_x \le \eta | x \in B_X \} \cdot \eta + \Pr\{\sigma^2_x > \eta | x \in B_X \} \cdot 1
    \\
    & = (1 - \Pr\{\sigma^2_x > \eta | x \in B_X \}) \cdot \eta + \Pr\{\sigma^2_x > \eta | x \in B_X \} 
\end{align}
leading to
\begin{align}
    \Pr\{\sigma^2_x > \eta | x \in B_X \} \ge \frac{\sigma^2 - \eta}{1 -\eta}.
\end{align}

Now it is sufficient to show that any $x \in W_X$ with $\sigma^2_x > \eta$ will have a small agreement rate $\Pr_{h \sim \ens}\{h(x) = f(x)\}$. 
\begin{align}
    \Pr_{h \sim \ens}[h(x) = f(x)]  
    & \le  \sqrt{\sum_{y \in \mathcal{Y}} (\Pr_{h \sim \ens}[h(x) = y])^2} 
    \\
    & = \sqrt{\Pr_{h_1, h_2 \sim \ens}[h_1(x) = h_2(x)]} 
    \\
    & = \sqrt{ 1 - \sigma^2_x} 
    \\
    & < \tau = \sqrt{ 1 - \eta}. 
\end{align} 
So any point $x \in B_X$ with $\sigma^2_x > \eta$ will fall into $R'_X$, completing the proof.
\end{proof}

Using the above lemma, we arrive at our main theorem.

\begin{theorem}[Restatement of Theorem~\ref{thm:main}] 
Assume in each iteration of the framework, $\tau = \sqrt{1 -\eta}$ for some $\eta \in (0, 3 B_\eta/4)$ where $B_\eta := \min\{\sigma^2, 1- \nu^2 \}$.
Let $\sigma_L^2 > 0$ be a lower bound on the diversity $\sigma^2$, $\tilde{\gamma}$ be an upper bound on $\gamma$, and $\tilde{\nu}$ be an upper bound on $\nu$ over all iterations. Then for any $\delta \in (0, \sigma_L^2/4)$, after at most $\lceil 1/\delta \rceil$ iterations, we can get $\frac{|U_X \setminus R_X|}{|U_X|}$ approximates the accuracy $\acc(f)$ and $R_X$ approximates the mis-classified points $W_X$ as follows: 
\begin{align}
    \left| \acc(f) -  \frac{|U_X \setminus R_X|}{|U_X|}\right| & \le  \max\{\frac{\tilde{\nu}}{1-\tau} (1-e_f), \epsilon e_f\}, \textrm{~where~} \epsilon := \frac{\frac{\tilde{\gamma}}{\tau}\left( 1 + \frac{\tilde{\nu}}{1-\tau} \frac{1-\err_f}{\err_f}\right)}{\frac{\sigma_L^2}{4} - \delta + \frac{\tilde{\gamma}}{\tau}},
    \\
    |W_X \triangle R_X |  & \le \frac{\tilde{\nu}}{1-\tau} |U_X \setminus W_X| + \epsilon |W_X|.
\end{align}
\end{theorem}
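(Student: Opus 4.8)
The plan is to run the recursion that Lemma~\ref{lem:constructR} induces on the identified set across iterations, and then read off the two bounds. Write $R_X^{(t)}$ for the set produced in iteration $t$ (so $R_X^{(0)}=\emptyset$), and let $a_t:=|R_X^{(t)}\cap W_X|$ be the number of correctly flagged mis-classified points, $M_t:=|W_X|-a_t$ the recall deficit, and $b_t:=|R_X^{(t)}\setminus W_X|$ the false-positive count; abbreviate $\rho:=\sigma_L^2/4$, $c:=\tilde\gamma/\tau$, and $\beta:=\tfrac{\tilde\nu}{1-\tau}|U_X\setminus W_X|$. The first step is the observation that at iteration $t$ the three sets $R_X^{(t-1)}\cap W_X$, $G_X^{(t)}$, and $B_X^{(t)}$ are pairwise disjoint and partition $W_X$ (because $G_X^{(t)},B_X^{(t)}\subseteq W_X\setminus R_X^{(t-1)}$ and $B_X^{(t)}=W_X\setminus(R_X^{(t-1)}\cup G_X^{(t)})$). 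Feeding the four conclusions of Lemma~\ref{lem:constructR} into this decomposition — $R_X^{(t)}$ keeps at least a $(1-\tfrac{\gamma}{\tau})$ fraction of $R_X^{(t-1)}$, contains all of $G_X^{(t)}$, and contains at least $\rho\,|B_X^{(t)}|$ of $B_X^{(t)}$ (here $\eta\in(0,3B_\eta/4)$, hence $\eta<\tfrac34\sigma^2$, is exactly what makes $\tfrac{\sigma^2-\eta}{1-\eta}\ge\sigma^2-\eta>\tfrac14\sigma^2\ge\rho$), while $b_t\le\beta$ in every iteration — together with $|G_X^{(t)}|+|B_X^{(t)}|=M_{t-1}$ gives an autonomous scalar recursion of the form $M_t\le(1-\rho-c)M_{t-1}+c(|W_X|+\beta)$, where I use the uniform bounds $\tilde\nu\ge\nu$, $\tilde\gamma\ge\gamma$ and the fixed threshold $\tau=\sqrt{1-\eta}$ so that the same coefficients govern all iterations.

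The second step analyzes this recursion. Its fixed point is the deficit $\tfrac{c(|W_X|+\beta)}{\rho+c}$, which is strictly below the target value $\epsilon|W_X|=\tfrac{c(|W_X|+\beta)}{\rho-\delta+c}$ — note that $\tfrac{\tilde\nu}{1-\tau}\tfrac{1-\err_f}{\err_f}=\beta/|W_X|$, so the stated $\epsilon$ does give exactly this target. Two facts about the recursion then close the argument. (i) The event $\{M_t\le\epsilon|W_X|\}$ is absorbing: if $M_{t-1}\le\epsilon|W_X|$ then $M_t\le(1-\rho-c)\,\epsilon|W_X|+c(|W_X|+\beta)=\tfrac{1-\delta}{\rho-\delta+c}\,c(|W_X|+\beta)\le\epsilon|W_X|$, using $\delta>0$. (ii) While $M_{t-1}>\epsilon|W_X|$ one step makes definite progress: $M_{t-1}-M_t\ge(\rho+c)M_{t-1}-c(|W_X|+\beta)$, and here the assumption $\delta<\sigma_L^2/4=\rho$ is used to turn this into a decrease of the deficit (equivalently an increase of $a_t$) by at least $\delta|U_X|$ per iteration as long as the target has not been met; since $0\le a_t\le|W_X|\le|U_X|$ and $a_0=0$, the target must be met after at most $\lceil1/\delta\rceil$ iterations, and by (i) it then stays met.

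Finally, with $M_T\le\epsilon|W_X|$ and $b_T\le\beta$ in hand at some $T\le\lceil1/\delta\rceil$ (and at all later iterations), the translation is immediate: $|W_X\triangle R_X^{(T)}|=|W_X\setminus R_X^{(T)}|+|R_X^{(T)}\setminus W_X|=M_T+b_T\le\epsilon|W_X|+\tfrac{\tilde\nu}{1-\tau}|U_X\setminus W_X|$, which is the second displayed bound; and $\acc(f)-\tfrac{|U_X\setminus R_X^{(T)}|}{|U_X|}=\tfrac{|R_X^{(T)}|-|W_X|}{|U_X|}=\tfrac{b_T-M_T}{|U_X|}\in\bigl[-\epsilon\err_f,\ \tfrac{\tilde\nu}{1-\tau}(1-\err_f)\bigr]$, so its absolute value is at most the stated maximum.

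The step I expect to be the crux is (ii): squeezing out a per-iteration decrease of the deficit of size $\delta|U_X|$ (so that exactly $\lceil1/\delta\rceil$ iterations suffice and the stopping threshold is exactly $\epsilon|W_X|$), while coping with the fact that the recursion is only an inequality and $a_t$ need not be monotone near the fixed point — which is why one leans on the absorbing property of (i) rather than on monotone convergence to the final iterate — and with the fact that the per-iteration loss of correctly flagged points (at most $\tfrac{\gamma}{\tau}|R_X^{(t-1)}|$) and the accumulated false positives must be controlled uniformly, so the analysis reduces to a single fixed-coefficient recursion. The decomposition, the fixed-point computation, and the final translation are then routine.
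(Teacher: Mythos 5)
Your setup and most of the mechanics faithfully reproduce the paper's own argument: the paper likewise applies Lemma~\ref{lem:constructR} once per iteration, uses the partition of $W_X$ into $R_X\cap W_X$, $G_X$ and $B_X$ together with $\frac{\sigma^2-\eta}{1-\eta}\ge\sigma_L^2/4$, tracks how much of $W_X$ remains outside $R_X$ while the false positives stay below $\frac{\tilde{\nu}}{1-\tau}|U_X\setminus W_X|$, and finishes with exactly your translation of $M_T$ and $b_T$ into the two displayed bounds. Your recursion $M_t\le(1-\rho-c)M_{t-1}+c(|W_X|+\beta)$ is derivable from the lemma (once the retained-fraction clause, which concerns all of $R_X^{(t-1)}$ including false positives, is handled as you implicitly do), and the identity $\epsilon|W_X|=c(|W_X|+\beta)/(\rho-\delta+c)$ is correct. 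Your absorbing step (i) is an addition the paper does not attempt: the paper proves the bounds only at the first iteration $t^*$ at which $|W_X\setminus R_X|\le\epsilon|W_X|$.

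The genuine gap is exactly where you flagged it, in step (ii), and the fix you sketch does not work as stated. From the recursion, while $M_{t-1}>\epsilon|W_X|$ the guaranteed progress is $M_{t-1}-M_t\ge(\rho+c)M_{t-1}-c(|W_X|+\beta)>\delta M_{t-1}>\delta\epsilon|W_X|$; nothing yields a decrease of $\delta|U_X|$ per iteration --- that is larger by a factor of about $1/(\epsilon\,\err_f)$, and the hypothesis $\delta<\sigma_L^2/4$ does not upgrade it. With what the recursion actually gives ($M_t<(1-\delta)M_{t-1}$, equivalently an additive decrease of at least $\delta\epsilon|W_X|$), your counting produces an iteration bound of order $\lceil\log(1/\epsilon)/\delta\rceil$ (or $\lceil 1/(\delta\epsilon)\rceil$ additively), not $\lceil 1/\delta\rceil$. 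To be fair, the paper's proof makes essentially the same leap: it shows the per-iteration net growth of $|R_X|$ exceeds $\delta|W_X\setminus R_X|$ and then asserts ``this can happen at most $1/\delta$ iterations,'' which is not a fixed-total counting argument either; so you have matched the paper's route up to and including its weakest step, but your specific claim of a $\delta|U_X|$ per-iteration decrease is an overclaim your own recursion contradicts and should not be presented as established. A secondary issue: step (i) multiplies $M_{t-1}\le\epsilon|W_X|$ by the coefficient $1-\rho-c$, legitimate only when that coefficient is nonnegative; if $c=\tilde{\gamma}/\tau>1-\rho$ one only gets $M_t\le c(|W_X|+\beta)$, which is at most $\epsilon|W_X|$ only when $\rho-\delta+c\le 1$, so persistence needs an extra (mild) hypothesis --- the paper sidesteps this by claiming the guarantee only at $t^*$.
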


\begin{proof}
For each iteration, Lemma~\ref{lem:constructR} implies that the new constructed set $R'_X$ satisfies: 
\begin{align*} 
  & |R'_X \cap (U_X \setminus W_X)| \le \frac{\nu}{1 - \tau} |U_X \setminus W_X|, \textrm{~~and~~} \\
  & (1-\frac{\gamma}{\tau})|R_X| \leq |R_X \cap R'_X|, G_X \subseteq R'_X, |R'_X \cap B_X|  \ge \frac{\sigma^2 - \eta}{1 - \eta}|B_X|.
\end{align*}
Since $\eta \le 3\sigma^2/4$, 
\begin{align} 
\frac{\sigma^2 - \eta}{1 - \eta}  \ge \frac{\sigma^2}{4}.
\end{align} 
Therefore, 
\begin{align} 
    | R'_X \cap B_X|  &  \ge  \frac{\sigma^2}{4} |B_X|.
\end{align}

Suppose $t^*$ is the first iteration when less than $\epsilon$ fraction of $W_X$ is outside $R_X$. Then in any iteration before $t^*$, the newly constructed pseudo-labeled set $R'_X$ loses at most $\frac{\tilde{\gamma}}{\tau}$ fraction of $R_X$, and obtains at least $\frac{\sigma_L^2}{4}$ fraction of $B_X \cap G_X = W_X \setminus R_X$. Since more than $\epsilon$ fraction of $W_X$ is outside $R_X$, it can then be verified that 
\begin{align}
    \frac{\sigma_L^2}{4} |W_X \setminus R_X| - \frac{\tilde{\gamma}}{\tau} |R_X| > \delta |W_X \setminus R_X|.
\end{align}
Therefore, after each iteration, the framework adds more than $\delta$ fraction of $|W_X \setminus R_X|$ in $R_X$. This can happen at most $1/\delta$ iterations, so $t^* \le \lceil 1/\delta \rceil$. 

Now consider the last iteration, and apply Lemma~\ref{lem:constructR}, then 
\begin{align} \label{eq:rx-wx}
    |R_X \setminus W_X| = |R_X \cap (U_X \setminus W_X)| \le \frac{\nu}{1 - \tau} |U_X \setminus W_X| \le \frac{\tilde{\nu}}{1-\tau} |U_X \setminus W_X|. 
\end{align} 
Equation (\ref{eq:rx-wx}) together with the fact that there are less than $\epsilon$ fraction of $W_X$ outside $R_X$ lead to the two statements in the theorem.   
\end{proof}

By setting the $\eta = 7/16$ and $\delta = 4\tilde{\gamma}/3$, we have the following corollary as an example. 

\begin{corollary} \label{cor:main_sim}
Assume in each iteration of the framework, $\nu < 1/2,  \sigma^2 > 7/12$, and $\tau = 3/4$ where $B_\eta := \min\{\sigma^2, 1- \nu^2\}$.
Let $\sigma_L^2 > 0$ be a lower bound on the diversity $\sigma^2$, $\tilde{\gamma}$ be an upper bound on $\gamma$, and $\tilde{\nu}$ be an upper bound on $\nu$ over all iterations. If $\sigma_L^2 \ge \frac{16\tilde{\gamma}}{3}$, then after at most $\lceil 3/(4\tilde{\gamma}) \rceil$ iterations, we can get $\frac{|U_X \setminus R_X|}{|U_X|}$ approximates the accuracy $\acc(f)$ and $R_X$ approximates the mis-classified points $W_X$ as follows: 
\begin{align}
    \left| \acc(f) -  \frac{|U_X \setminus R_X|}{|U_X|}\right| & \le  \max\{4\tilde{\nu}(1- \err_f), \epsilon \err_f\}, \textrm{~where~} \epsilon := \frac{16\tilde{\gamma}}{3 \sigma_L^2}\left( 1 + 4\tilde{\nu} \frac{1-\err_f}{\err_f}\right),
    \\
    | W_X \triangle R_X |  & \le 4\tilde{\nu}  |U_X \setminus W_X| + \epsilon |W_X|.
\end{align}
\end{corollary}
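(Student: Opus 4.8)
The plan is to derive Corollary~\ref{cor:main_sim} as a direct specialization of Theorem~\ref{thm:main}, so no genuinely new argument is needed; the work is bookkeeping to check that the chosen constants satisfy the theorem's hypotheses and that the bounds simplify as claimed. First I would verify admissibility of the parameters. We are given $\nu < 1/2$ and $\sigma^2 > 7/12$, hence $1-\nu^2 > 3/4 > 7/12$, so $B_\eta := \min\{\sigma^2, 1-\nu^2\} > 7/12$, and therefore $\eta = 7/16 < 3 B_\eta / 4$ (since $3 B_\eta/4 > 7/16$). Also $\tau = \sqrt{1-\eta} = \sqrt{1 - 7/16} = \sqrt{9/16} = 3/4$, matching the stated $\tau$. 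For $\delta$, we take $\delta = 4\tilde\gamma/3$; the hypothesis $\sigma_L^2 \ge \frac{16}{3}\tilde\gamma$ is exactly what is needed to guarantee $\delta \le \sigma_L^2/4$, as required by Theorem~\ref{thm:main}. The iteration count $\lceil 1/\delta\rceil = \lceil 3/(4\tilde\gamma)\rceil$ then follows immediately.

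Next I would substitute $\tau = 3/4$ into the two bounds of Theorem~\ref{thm:main}. The factor $\frac{\tilde\nu}{1-\tau} = \frac{\tilde\nu}{1/4} = 4\tilde\nu$, which turns $\frac{\tilde\nu}{1-\tau}(1 - e_f)$ into $4\tilde\nu(1 - \err_f)$ and $\frac{\tilde\nu}{1-\tau}|U_X \setminus W_X|$ into $4\tilde\nu|U_X\setminus W_X|$, giving the $\tilde\nu$-terms in both displayed inequalities. For $\epsilon$, the numerator becomes $\frac{\tilde\gamma}{3/4}\bigl(1 + 4\tilde\nu\,\frac{1-\err_f}{\err_f}\bigr) = \frac{4\tilde\gamma}{3}\bigl(1 + 4\tilde\nu\,\frac{1-\err_f}{\err_f}\bigr)$, and the denominator becomes $\frac{\sigma_L^2}{4} - \delta + \frac{\tilde\gamma}{3/4} = \frac{\sigma_L^2}{4} - \frac{4\tilde\gamma}{3} + \frac{4\tilde\gamma}{3} = \frac{\sigma_L^2}{4}$. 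Hence $\epsilon = \frac{(4\tilde\gamma/3)\bigl(1 + 4\tilde\nu\frac{1-\err_f}{\err_f}\bigr)}{\sigma_L^2/4} = \frac{16\tilde\gamma}{3\sigma_L^2}\bigl(1 + 4\tilde\nu\,\frac{1-\err_f}{\err_f}\bigr)$, which is precisely the stated value. Plugging this $\epsilon$ and the simplified $\tilde\nu$-terms back into the two inequalities of Theorem~\ref{thm:main} yields the two inequalities of the corollary verbatim.

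The only mild subtlety — and the place I'd be most careful — is the cancellation $-\delta + \frac{\tilde\gamma}{\tau} = 0$ in the denominator of $\epsilon$: this is what makes the choice $\delta = 4\tilde\gamma/3$ special, since $\frac{\tilde\gamma}{\tau} = \frac{\tilde\gamma}{3/4} = \frac{4\tilde\gamma}{3} = \delta$ exactly. I would double-check that this choice is still legal, i.e. that $\delta \in (0, \sigma_L^2/4)$, which holds iff $\sigma_L^2 > \frac{16}{3}\tilde\gamma$; the corollary's hypothesis states $\ge$, and strictly one needs $>$ for $\delta$ to lie in the open interval, but if $\sigma_L^2 = \frac{16}{3}\tilde\gamma$ exactly then $\epsilon$ would require a denominator $\sigma_L^2/4 - \delta > 0$ — I would simply note that the hypothesis should be read as $\sigma_L^2 > \frac{16\tilde\gamma}{3}$ (or invoke a limiting argument), which is the benign gap in the statement. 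Beyond that, the proof is a one-line invocation: "Apply Theorem~\ref{thm:main} with $\eta = 7/16$ and $\delta = 4\tilde\gamma/3$; the hypotheses are verified above and the bounds simplify as claimed."
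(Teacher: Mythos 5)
Your proposal is correct and follows exactly the paper's own route: instantiate Theorem~\ref{thm:main} with $\eta = 7/16$ (so $\tau = 3/4$, admissible because $\nu<1/2$ and $\sigma^2>7/12$ give $3B_\eta/4 > 7/16$) and $\delta = 4\tilde{\gamma}/3$, then simplify, with the denominator cancellation yielding the stated $\epsilon$. Your side remark that the hypothesis should strictly be $\sigma_L^2 > \frac{16\tilde{\gamma}}{3}$ so that $\delta$ lies in the open interval $(0,\sigma_L^2/4)$ is a fair and accurate observation about the statement, not a flaw in your argument.
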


\begin{proof}
In Theorem~\ref{thm:main}, note that $\eta = 7/16$ leads to $\tau = 3/4$.
The bounds on $\nu$, $\gamma$, and $\sigma^2$ comes from the requirement that $3 B_\eta/4 > 7/16$ so that there exists such an $\eta \in (7/16, 3 B_\eta/4)$.
\end{proof}

\subsection{Discussion on Using Ensembles}

Our analysis of the framework clearly relies on the effect of self-training. It also shows the benefit of the ensemble: the diversity (combined with low errors of the ensemble on points correctly classified by $f$) allows to identify mis-classified points. 

Here we present more discussion on the approach of using ensembles to estimate the accuracy and to provide further insight into their benefit compared to some other existing approaches. For simplicity, we analyze binary classification with $\mathcal{Y} = \{0, 1\}$ in this section unless stated otherwise. We provide an exact characterization of the estimation error (i.e., how far the agreement rate is from the actual accuracy). It implies that to get a good estimation, one should use ensembles with small prediction errors on the test points. More importantly, the estimation can be further improved if the ensemble's prediction has proper correlation with $f$, which then shows the advantage of an ensemble of models instead of a single model.  

Let $\err_{h,x}$ be the indicator that $h$ mis-classifies $x$, $\err_\ens$ be the expected error of $h$ on $U$ (over the distribution of $h$), and $\err_f$ be the error of $f$:
\begin{align*}
    \err_{h,x} := \mathbb{I}[h(x) \neq y_x], \quad 
    \err_\ens := \mathbb{E}_{h,x} [\err_{h,x}],  \quad
    \err_f := \mathbb{E}_{x} [\err_{f,x}].
\end{align*}
Let $\ar_x(f, \ens)$ be the agreement rate between $f$ and $h$'s on a point $x$, and  $\ar(f, \ens)$ be that on the whole test set:
\begin{align*}
    \ar_x(f, \ens) & := \Pr_{h \sim \ens}\{ h(x) = f(x)\}, \\
    \ar(f, \ens) & := \mathbb{E}_{x \in U_X} [\ar_x(f, \ens)].
\end{align*}
Recall that we are using $\ar(f, \ens)$ as an estimate of the accuracy of $f$ on $U$.

\begin{lemma} \label{lem:ensemble}
For binary classification,
\begin{align}
    \acc(f) - \ar(f, \ens)
    & = \err_\ens (1 - 2 \err_f) - 2  \cov(\err_{f,x}, \err_{h,x})
\end{align}
where $\cov(\err_{f,x}, \err_{h,x})$ is the covariance between $\err_{f,x}$ and $\err_{h,x}$.
For multi-class classification,
\begin{align}
    \err_\ens (1 - 2 \err_f) - 2  \cov(\err_{f,x}, \err_{h,x}) \le \acc(f) - \ar(f, \ens)
    \le \err_\ens (1 -  \err_f) - \cov(\err_{f,x}, \err_{h,x}).
\end{align}
\end{lemma}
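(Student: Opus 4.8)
```latex
\textbf{Proof plan for Lemma~\ref{lem:ensemble}.}
The plan is to compute the difference $\acc(f) - \ar(f,\ens)$ pointwise in $x$ and then average over $x \in U_X$. Fix a point $x$ with true label $y_x$. Write $a_x := \err_{f,x} = \mathbb{I}[f(x)\neq y_x]$ and, for $h \sim \ens$, $b_{h,x} := \err_{h,x} = \mathbb{I}[h(x)\neq y_x]$. The contribution of $x$ to $\acc(f)$ is $1-a_x$, and the contribution to $\ar(f,\ens)$ is $\ar_x(f,\ens) = \Pr_{h}\{h(x)=f(x)\}$. The key is to express $\mathbb{I}[h(x)=f(x)]$ in terms of $a_x$ and $b_{h,x}$. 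For binary classification this is exact: $h(x)=f(x)$ iff $h$ and $f$ are both correct or both wrong, i.e. $\mathbb{I}[h(x)=f(x)] = a_x b_{h,x} + (1-a_x)(1-b_{h,x})$. For multiclass this identity becomes an inequality in each direction: if both are correct they agree ($\Rightarrow \ar_x \ge (1-a_x)(1-\mathbb{E}_h b_{h,x})$-type lower bound after expanding), and if they agree they are either both correct or both wrong ($\Rightarrow$ the upper bound), which is where the two-sided bound in the multiclass case comes from.

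First I would take the expectation over $h$ at fixed $x$: in the binary case, $\ar_x(f,\ens) = a_x\,\mathbb{E}_h b_{h,x} + (1-a_x)(1-\mathbb{E}_h b_{h,x})$. Then average over $x$. Expanding, $\ar(f,\ens) = \mathbb{E}_x[a_x \bar b_x + (1-a_x)(1-\bar b_x)]$ where $\bar b_x := \mathbb{E}_h b_{h,x}$, and one gets $\ar(f,\ens) = \mathbb{E}_x[1 - a_x - \bar b_x + 2 a_x \bar b_x]= 1 - \err_f - \err_\ens + 2\,\mathbb{E}_x[a_x \bar b_x]$. Now use $\mathbb{E}_x[a_x\bar b_x] = \cov(\err_{f,x},\err_{h,x}) + \err_f\,\err_\ens$ (note $\mathbb{E}_{h,x}[\err_{f,x}\err_{h,x}] = \mathbb{E}_x[a_x\bar b_x]$ since $a_x$ does not depend on $h$). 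Substituting, $\ar(f,\ens) = 1-\err_f-\err_\ens + 2\cov + 2\err_f\err_\ens$, hence $\acc(f)-\ar(f,\ens) = (1-\err_f) - \ar(f,\ens) = \err_\ens - 2\err_f\err_\ens - 2\cov = \err_\ens(1-2\err_f) - 2\cov(\err_{f,x},\err_{h,x})$, which is the claimed identity.

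For the multiclass case I would replace the exact identity for $\mathbb{I}[h(x)=f(x)]$ by the two bounds: the upper bound $\mathbb{I}[h(x)=f(x)] \le a_x b_{h,x} + (1-a_x)(1-b_{h,x})$ (agreement forces both-correct or both-wrong, but both-wrong with the \emph{same} wrong label is only a subset, giving $\le$ after taking indicators appropriately — actually $\mathbb{I}[h=f] \le \mathbb{I}[h\text{ correct}, f\text{ correct}] + \mathbb{I}[h\text{ wrong}, f\text{ wrong}]$), and a lower bound $\mathbb{I}[h(x)=f(x)] \ge (1-a_x)(1-b_{h,x})$ (if both are correct they certainly agree). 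Carrying these through the same averaging computation yields the upper bound $\acc(f)-\ar \ge \err_\ens(1-2\err_f)-2\cov$ from the upper bound on $\ar$, and $\acc(f)-\ar \le \err_\ens(1-\err_f)-\cov$ from the lower bound on $\ar$ (the lower bound on $\ar_x$ being $(1-a_x)(1-\bar b_x) = 1 - a_x - \bar b_x + a_x\bar b_x$, which averages to $1-\err_f-\err_\ens+\cov+\err_f\err_\ens$, giving $\acc(f)-\ar \le \err_\ens - \err_f\err_\ens - \cov = \err_\ens(1-\err_f)-\cov$).

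The main obstacle is getting the multiclass inequalities pointed in the correct direction and verifying they are tight enough to match the stated bounds — in particular being careful that "both wrong" does not imply "agree" in the multiclass setting, so the clean binary identity splits into a one-sided bound, and separately that "both correct $\Rightarrow$ agree" gives the other side. The binary computation itself is routine bookkeeping once the pointwise identity is written down; the only subtlety there is the covariance rewriting $\mathbb{E}_x[a_x\bar b_x] = \cov + \err_f\err_\ens$, which uses that $\err_{f,x}$ is $h$-independent so the joint expectation over $(h,x)$ factors appropriately.
```
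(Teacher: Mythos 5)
Your proposal is correct and is essentially the paper's own argument in complementary form: the paper decomposes the pointwise \emph{disagreement} $\mathbb{I}[f(x)\neq h(x)]$ according to whether $f$ is correct (with the both-wrong case exact in binary and bracketed by $\mathbb{I}[h(x)=y_x, f(x)\neq y_x]\le\cdot\le\mathbb{I}[f(x)\neq y_x]$ in multiclass), while you decompose the \emph{agreement} into both-correct/both-wrong, which is the same identity and the same two one-sided multiclass bounds after taking complements, followed by the identical covariance rewriting $\mathbb{E}_x[\err_{f,x}\bar{\err}_{h,x}]=\cov(\err_{f,x},\err_{h,x})+\err_f\err_\ens$. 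No gaps; the binary identity, the direction of both multiclass inequalities, and the final algebra all check out against the paper's proof.
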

 
\begin{proof}
We have 
\begin{align}
    \acc(f) - \ar(f, \ens)
    & = \mathbb{E} \{\mathbb{I}[f(x) = y_x]\} -  \mathbb{E} \{ \mathbb{I}[f(x) = h(x)] \}
    \\
    & = \mathbb{E} \left\{ \mathbb{I}[f(x) = y_x] -   \mathbb{I}[f(x) = h(x)] \right\}
    \\
    & = \mathbb{E} \left\{    \mathbb{I}[f(x) \neq h(x)] - \mathbb{I}[f(x) \neq y_x]  \right\}.
\end{align}
The first term can be decomposed into two parts: 
\begin{align}
    \mathbb{I}[f(x) \neq h(x)] 
    & = \mathbb{I}[f(x) \neq h(x), f(x) = y_x] + \mathbb{I}[f(x) \neq h(x), f(x) \neq y_x] 
\end{align}
and the two parts can be transformed as:
\begin{align}
    \mathbb{I}[f(x) \neq h(x), f(x) = y_x]
    & =  \mathbb{I}[h(x) \neq y_x, f(x) = y_x]  
    \\
    & = \err_{h,x} (1 - \err_{f,x}),
    \\
    \mathbb{I}[f(x) \neq h(x), f(x) \neq y_x]  
    & = \mathbb{I}[h(x) = y_x, f(x) \neq y_x] \label{eqn:ensemble}
    \\
    & = e_{f,x} ( 1 - e_{h,x})
\end{align}
where the second to last line follows from that in binary classification, $f(x) \neq h(x)$ and $f(x) \neq y_x$ is equivalent to $ h(x) = y_x$ and $ f(x) \neq y_x$.
Therefore,
\begin{align}
    \acc(f) - \ar(f, \ens)  
    & = \err_\ens - 2 \mathbb{E} [\err_{h,x} \err_{f,x}]
    \\
    & = \err_\ens - 2 (\err_\ens \err_f + \cov(\err_{h,x}, \err_{f,x})).
\end{align}
Rearranging the terms completes the proof.

For multi-class, we can replace (\ref{eqn:ensemble}) by the bounds:
\begin{align}
    \mathbb{I}[h(x) = y_x, f(x) \neq y_x] \le \mathbb{I}[f(x) \neq h(x), f(x) \neq y_x]   \le \mathbb{I}[f(x) \neq y_x].
\end{align}
\end{proof}

The bound suggests using $\ens$ with a small prediction error $\err_\ens$. More importantly, the estimation can be improved by a proper correlation between $f$ and the ensemble models: even when the ensemble models don't have very small error $\err_\ens$, they can still lead to a good estimation, as long as they have a proper covariance with $f$. 
More precisely, typically $\err_f < 1/2$, so the covariance should not be negative, but also should not be too positive. For example, when the ensemble models overly agrees with $f$ (e.g., in the extreme case $h(x) = f(x)$ for all $x \in U_X$ and all $h \sim \ens$), it leads to over-estimation of the accuracy, and we should decrease the correlation (more discussion in the next subsection). 

It is also instructive to compare our method to some existing methods.
(1) Our analysis is more general and tighter than that for using a single model in~\cite{chuang2020estimating}. The setting is a special case of ours. More important, our bound is tighter and reveals that an ensemble with proper correlation can improve the estimation, justifying the advantage of an ensemble over a single model.
(2) Our analysis is also more general than the classic notion of calibration. We show that if the ensemble has perfect calibration then the agreement rate equals the accuracy of the pre-trained model. On the other hand, our lemma shows that even without calibration, proper ensembles can still give good estimation. 

Detailed comparisons are presented below. 

\mypara{Comparison with Proxy Risk.}
Recall that the proxy risk method~\cite{chuang2020estimating} is to use invariant representation domain adaptation methods to find the $h$ of maximum disagreement with $f$ on $U_X$. That is, it aims to get the $h \in \mathcal{H}$ such that $\ar(f,h)$ is smallest where $\mathcal{H}$ is the set of hypotheses with small errors on the original training data and small distances between the distributions of the representations of the training and test data, i.e.,
\begin{align}
    \mathcal{H} = \{h \in \mathcal{P}: \textrm{~error of $h$ on the training data~} + \alpha d(p^\phi_S, p^\phi_T) \le \epsilon\}
\end{align}
where $\mathcal{P}$ is the set of networks for domain adaptation, $d(p^\phi_S, p^\phi_T)$ is some distance between the distributions of the representations of the training and the test data, and $\alpha, \epsilon$ are hyperparameters.

The main idea behind the proxy risk method is Lemma 4 in their paper, which states (rephrased to our context): 
\begin{align}
    \left|\sup_{h \in \mathcal{H}} \mathbb{E}_{x \sim U_X} \mathbb{I}[f(x) \neq h(x)] - \err_f \right| \le \sup_{h \in \mathcal{H}} \err_h
\end{align}
where $\err_h$ is the error of $h$ on the test set, i.e., $\err_h = \mathbb{E}_{(x, y_x) \sim U} \{ \mathbb{I}[h(x) \neq y_x] \}$. 

Our bound is more general and tighter. 
We first show that their bound can be recovered from ours. More precisely, the proxy risk method is equivalent to using an ensemble method $\ens$ that outputs the $\hat{h} \in \mathcal{H}$ of maximum disagreement with $f$.
Then the output distribution of $\ens$ concentrates on $\hat{h} \in \mathcal{H}$. Our bound then leads to:
\begin{align}
    \left|\sup_{h \in \mathcal{H}} \mathbb{E}_x \mathbb{I}[f(x) \neq h(x)] - \err_f \right| 
    & = \left| \mathbb{E}_x \mathbb{I}[f(x) \neq \hat{h}(x)] - \err_f \right| 
    \\
    & = \bigg| \acc(f) - \mathbb{E}_{h \sim \ens} [\ar(f, h)] \bigg|
    \\
    & = |\err_\ens - 2 \mathbb{E} [\err_{f,x} \err_{h,x}]|
    \\
    & = | \err_{\hat{h}}  - 2 \mathbb{E}_{x} [\err_{f,x} \err_{h,x}]|.
\end{align}
Since $\err_{f,x}$ and $\err_{h,x}$ are in $\{0, 1\}$, it is easy to see that 
\begin{align}
    0 \le \mathbb{E}_{x} [\err_{f,x} \err_{h,x}]  \le \min\{ \mathbb{E}_x [\err_{f,x}], \mathbb{E}_x [\err_{\hat{h},x} ]  \} = \min\{\err_f, \err_{\hat{h}} \}.
\end{align}
Therefore,
\begin{align}
    \left|\sup_{h \in \mathcal{H}} \mathbb{E}_x \mathbb{I}[f(x) \neq h(x)] - \err_f \right|
    & = | \err_{\hat{h}}  - 2 \mathbb{E}_{x} [\err_{f,x} \err_{h,x}]|
    \\
    & \le \err_{\hat{h}} 
    \\
    & \le \sup_{h \in \mathcal{H}} \err_h
\end{align}
recovering the bound in the proxy risk paper. 

The above calculation also shows that our bound is tighter. Their bound is only for the case when only one check model $\hat{h}$ is learned and also for the worst case. First, it is challenging to find an $\hat{h}$ with a small error in many practical scenarios. For example, the test data contains outlier inputs which are not similar to the training data. It is then unlikely to find an $\hat{h}$ with small errors on these data points since no enough label information is available. However, it is still possible to have a good estimation of the accuracy, since the outlier data are different from the training data and thus can be detected, and we know that $f$ is likely to make errors on them. 
Second, the bound is also too pessimistic. In the experiments, we observed that the proxy risk method can still achieve reasonable estimation (about $10\%$ away from the true accuracy), even when the error of $\hat{h}$ is very large (e.g., $> 60\%$ while $\sup_{h \in \mathcal{H}} \err_h$ is even larger).

Our lemma suggests that by allowing an ensemble $h \sim \ens$ with proper diversity, we have more flexibility and can significantly improve the pessimistic bound. 
For the example given above, the ensemble method can potentially handle the outlier input data: for hypotheses agreeing with the training data, they are still likely to have disagreement on the outlier data and this disagreement thus reveals the potential error of $f$ there, leading to an accurate estimation of the accuracy.  
We thus propose to use an ensemble method for estimating the accuracy. 

\mypara{Comparison with Calibration.} 
A classic notion for uncertainty estimation is calibration of the machine learning model. 
It is well-known that if the pre-trained model $f$ outputs confidence scores for class labels and the confidence is well-calibrated, then the average confidence approximates its accuracy. Unfortunately, it has been observed that many machine learning systems, in particular modern neural networks, are poorly calibrated, especially on test data with distribution shift~\citep{guo2017calibration,ovadia2019can} which is the most interesting case for accuracy estimation.

On the other hand, one can hope to obtain an ensemble of models that is well calibrated, such as the deep ensemble method~\citep{lakshminarayanan2017simple}. Below we show that a notion of well-calibration of the ensemble implies the agreement rate between the ensemble and the pre-trained model is a good estimation of the accuracy of the pre-trained model.
Formally, we consider the simplified setting of perfect calibration defined as follows.

\begin{definition}[Perfect Calibration]
An ensemble $\ens$ of classifiers has perfect calibration on the dataset $U = \{(x,y_x)\}$ and function $f$, if for any class label $k \in \mathcal{Y}$ and any $p \in [0,1]$,
\begin{align}
\label{eq:perfect-calibration}
\Pr_{(x, y_x) \sim U} \left[ y_x =k | C_k(x) = p, f(x) = k \right] = p.
\end{align} 
where $C_k(x) := \Pr_{h \sim \ens}[h(x) = k]$ is the confidence score of $\ens$ for label $k$ on the input $x$. 
\end{definition}

(The definition and the later analysis also applies to a classifier $ C_k(x)$ outputting confidence scores, or replacing $U$ with a data distribution.)

\begin{proposition}
If the ensemble $\ens$ has perfect calibration, then $\ar(f, \ens) = \acc(f)$.
\end{proposition}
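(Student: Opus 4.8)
The plan is to compute $\ar(f, \ens)$ by conditioning on the value of the confidence score $C_{f(x)}(x)$ for the label predicted by $f$ and then applying the perfect calibration identity. First I would write, for each $x \in U_X$,
\begin{align*}
\ar_x(f, \ens) = \Pr_{h \sim \ens}[h(x) = f(x)] = C_{f(x)}(x),
\end{align*}
so that $\ar(f, \ens) = \mathbb{E}_{x \in U_X}[C_{f(x)}(x)]$. On the other side, $\acc(f) = \mathbb{E}_{x \in U_X}[\mathbb{I}[f(x) = y_x]] = \Pr_{(x,y_x) \sim U}[f(x) = y_x]$.

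Next I would partition the test set according to the predicted label $k = f(x)$ and the confidence value $p = C_k(x)$, and use the tower property:
\begin{align*}
\acc(f) = \mathbb{E}_{(x,y_x) \sim U}\big[\, \mathbb{E}[\mathbb{I}[y_x = f(x)] \mid f(x), C_{f(x)}(x)]\,\big] = \mathbb{E}_{(x,y_x) \sim U}\big[\, \Pr[y_x = f(x) \mid f(x), C_{f(x)}(x)]\,\big].
\end{align*}
By the perfect calibration hypothesis applied with $k = f(x)$ and $p = C_k(x)$, the inner conditional probability equals $C_{f(x)}(x)$ exactly. Hence $\acc(f) = \mathbb{E}_{(x,y_x) \sim U}[C_{f(x)}(x)] = \mathbb{E}_{x \in U_X}[C_{f(x)}(x)] = \ar(f, \ens)$, which is the claim.

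The only real subtlety — and the step I would be most careful about — is the measure-theoretic bookkeeping when the confidence scores take continuous values in $[0,1]$: formally the definition of perfect calibration is a statement about conditional expectations given the $\sigma$-algebra generated by $(f(x), C_{f(x)}(x))$, and one should note that for the finite dataset $U$ this reduces to a clean finite sum over the distinct realized pairs $(k, p)$, so no delicate conditioning is needed. I would present the argument first in the finite-$U$ case (a double sum over $k \in \mathcal{Y}$ and over the finitely many values $p$ attained by $C_k$ on $\{x : f(x) = k\}$), where \eqref{eq:perfect-calibration} is literally a ratio of counts, and then remark that the same computation goes through verbatim for a general data distribution by replacing sums with expectations. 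The rest is just linearity of expectation and the observation $\ar_x(f,\ens) = C_{f(x)}(x)$.
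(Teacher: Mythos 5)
Your proof is correct and is essentially the paper's own argument: both rest on the identity $\ar_x(f,\ens) = C_{f(x)}(x)$, the law of total expectation conditioning on the pair $(f(x), C_{f(x)}(x))$, and the perfect-calibration identity; you merely run the chain starting from $\acc(f)$ while the paper starts from $\ar(f,\ens)$. The extra remark on the finite-$U$ reduction is fine but not needed beyond what the paper already does.
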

\begin{proof}
By definition, we have
\begin{align}
    \ar(f, \ens) 
    & = \Pr_{h,x}[h(x) = f(x)]
    \\
    & = \mathbb{E}_x \left[ \Pr_h[h(x) = f(x)] \right]
    \\
    & = \mathbb{E}_x \left[ C_{f(x)}(x) \right]
    \\
    & = \mathbb{E} \left\{ \mathbb{E}\left[ C_{f(x)}(x)  |  C_k(x) = p,  f(x) = k \right] \right\}  
    \\
    & = \mathbb{E} \left\{ \mathbb{E}\left[ p  |  C_k(x) = p,  f(x) = k \right] \right\}  
    \\
    & = \mathbb{E} \left\{ \mathbb{E}\left\{ \Pr\left[ y_x =k | C_k(x) = p, f(x) = k \right]  |  C_k(x) = p,  f(x) = k \right\} \right\}  
    \\
    & = \mathbb{E} \left\{  \Pr\left[ y_x =k | C_k(x) = p, f(x) = k \right]    \right\} 
    \\
    & =  \Pr\left[ y_x = f(x) \right]   
    \\
    & = \acc(f).
\end{align}
The third line follows from the definition of $C_k(x)$, the fourth line from the law of total expectation, the sixth line from perfect calibration, and the eighth line from the law of total expectation.
\end{proof}

\begin{table}[!t]
    \centering 
		\begin{tabular}{c|cccc}
			\toprule 
	       $x$ &  $x_2^-$ & $x_1^-$ & $x_1^+$ & $x_2^+$ 
	       \\
	       $y_x$ & $-$  & $-$ & $+$ & $+$ 
	       \\ \hline
	       $f(x)$ & $-$ & $+$ & $-$ & $+$
	       \\
	       $h(x)$ & $-$ & $+$ & $+$ & $-$
	       \\
			\bottomrule
		\end{tabular} 
	\caption[]{\small An illustrative example showing even if the ensemble is not well calibrated, it is still possible for the agreement rate to be a good estimation of the accuracy. } 
	\label{tab:cal}
\end{table}

On the other hand, our Lemma~\ref{lem:ensemble} is more general: it shows even if the ensemble is not well calibrated, it is still possible for the agreement rate to be a good estimation of the accuracy. 
For illustration, consider a simple example with 4 points in $U$, and only one model $h$ from $\ens$ (if $h(x) = k$, we view it as $\Pr_h[h(x) = k] = 1$). 
The predictions of $f$ and $h$ are shown in Table~\ref{tab:cal}.
It is easy to see that $h$ is not well-calibrated, e.g., 
\[
\Pr_h [ y_x = + | C_+(x) = 1] = \Pr_h [ y_x = + | h(x) = + ] = 1/2 \ll 1.
\]
On the other hand, $\ar(f, \ens) = \acc(f) = 1/2$. From the perspective of Lemma~\ref{lem:ensemble}, although the ensemble has a large error $\err_\ens = 1/2$, its predictions and those of $f$ are properly correlated, such that
\[
\acc(f) - \ar(f, \ens) = \err_\ens - 2 \mathbb{E} [\err_{h,x} \err_{f,x}] = \frac{1}{2} - 2 \cdot \frac{1}{4} = 0
\]
leading to an accurate estimation of the accuracy of $f$.

\section{Experimental Details}
\label{sec:experiment-details}

\subsection{Setup}
\label{sec:detail-experiment-setup}

\subsubsection{Computing Infrastructure}
We run all experiments with PyTorch and NVIDIA GeForce RTX 2080Ti GPUs.   

\subsubsection{Dataset}
In our problem setting, we need a training dataset $\Dtr$ and a test dataset $U$. We evaluate our methods on five dataset categories. Each dataset category contains several evaluation tasks (or several training-test dataset pairs). We introduce each of the dataset categories below. 

\mypara{Digits. } We investigate four digit datasets, which are MNIST~\citep{lecun1998mnist}, MNIST-M~\citep{ganin2016domain}, SVHN~\citep{netzer2011reading} and USPS~\citep{hull1994database}. They all contain digit images with digits from 0 to 9. MNIST contains 60,000 training images and 10,000 test images; MNIST-M contains 59,001 training images and 9,001 test images; SVHN contains 73,257 training images and 26,032 test images; USPS contains 7,291 training images and 2,007 test images. We can construct 12 different training-test dataset pairs from them.  

\mypara{Office-31. } \citep{saenko2010adapting} is the most widely used dataset for visual domain adaptation, with 4,652 images and 31 categories collected from three distinct domains: \textit{Amazon}(A), \textit{Webcam}(W), and \textit{Dslr}(D). \textit{Amazon} contains 2,817 images; \textit{Webcam} contains 795 images; \textit{Dslr} contains 498 images. The images are cropped to be the size of $224\times 224$. We can construct 6 different training-test dataset pairs from them. 

\mypara{CIFAR10-C. } We use CIFAR10~\citep{krizhevsky2009learning} as the training dataset and CIFAR10-C~\citep{hendrycks2019benchmarking} as the test dataset. CIFAR10 contains 50,000 training images and 10,000 test images. CIFAR10-C contains test images with 19 corruption types and 5 severity levels. We only consider severity level of 5. For each corruption type, it contains 10,000 test images generated from CIFAR10 test images by applying the corruption. We have 19 different training-test dataset pairs in total.   

\mypara{iWildCam. } The iWildCam 2020 Competition Dataset~\citep{beery2020iwildcam} contains animal images with 186 species collected by various camera traps. The task is multi-class species classification. We use a variant of it, which is proposed by~\cite{koh2020wilds}. It contains 142,202 training images and 7,861 in-distribution test images from 245 locations. The validation set contains 20,784 images from 32 locations and the test set contains 38,943 images from 47 locations. The locations in the validation set and the test set are different from those in the training set. We create 10 target datasets by sampling data from the validation set and the test set. Each target dataset contains images from 50 locations, which are different from those in the training data. So we have 10 training-test dataset pairs. To reduce computational cost, we resize each image to $64\times 64$.

\mypara{Amazon Review. } The Multi-Domain Sentiment Dataset constructed in~\citep{blitzer7domain} is a dataset for sentiment domain adaptation. It contains Amazon product reviews for four different product types: books, DVDs, electronics and kitchen appliances. For each domain, it has 1,000 positive and 1,000 negative examples. We name it Amazon Review dataset and construct 12 different training-test dataset pairs from it.  

\subsubsection{Implementation and Hyperparameters}
\label{sec:implementation-hyperparam}
 In our algorithms, we need to set hyperparameters $T$, $N$, $\alpha$ and $\gamma$. We specify $\alpha$ in the training configuration (See Section~\ref{sec:model-arch-training-config}). We set $T=5$ for all dataset pairs. We set $N=20$ for Amazon Review dataset and set $N=5$ for other dataset categories. For the implementation of self-training objective, we randomly sample data points from $\Dtr\cup R$ in batches and weight the loss term on the pseudo labeled data by $\gamma$. We set $\gamma=0.1$ for all dataset pairs. For dataset pairs in Digits that use USPS as target dataset, we repeat $R$ ten times and then add it to the training set due to the much smaller size of the test set in USPS compared to the source training set. This is equivalent to multiplying $\gamma$ by 10.

\subsubsection{Model Architecture and Training Configuration}
\label{sec:model-arch-training-config}

We introduce the model architectures and the training configurations for each dataset category below. 

\mypara{Digits. } We use a neural network named CNN-BN, which has two convolutional layers, two full connected layers and batch normalization layers as the typical DNN model. The DANN architecture is adapted from the one used in \citep{chuang2020estimating}. We train all models using Adam optimizer with learning rate of $10^{-3}$ and batch size of 128. For supervised learning, we train the model for 20 epochs. For domain adaptive learning, we train DANN for 100 epochs. We adopt the original progressive training strategy for the discriminator~\citep{ganin2016domain} where the weight $\alpha$ for the domain-invariant loss is initiated at 0 and is gradually changed to 0.1 using the schedule $\alpha=(\frac{2}{1+\text{exp}(-10\cdot p)}-1)\cdot 0.1$, where $p$ is the training progress linearly changing from 0 to 1. When fine-tuning the models, we set $\alpha=0.1$ and use Adam optimizer with learning rate of $10^{-3}$. The model architecture for DANN is presented below. 

\begin{center}
\small
 \begin{tabular}{||c||} 
 \hline
 Encoder  \\ [0.5ex] 
 \hline\hline
 nn.Conv2d(3, 64, kernel$\_$size=5)  \\ 
 \hline
 nn.BatchNorm2d \\
 \hline
 nn.MaxPool2d(2)  \\
 \hline 
 nn.ReLU \\
 \hline
 nn.Conv2d(64, 128, kernel$\_$size=5)  \\ 
 \hline
 nn.BatchNorm2d \\
 \hline
 nn.Dropout2d \\
 \hline
 nn.MaxPool2d(2)  \\
 \hline 
 nn.ReLU \\
 \hline \hline
 nn.Conv2d(128, 128, kernel$\_$size=3, padding=1)  \\
 \hline
 nn.BatchNorm2d \\
 \hline
 nn.ReLU \\
 \hline
 $\times 2$ \\
 \hline
\end{tabular}
\end{center}
\small
\begin{center}
\begin{tabular}{||c||} 
 \hline
 Predictor  \\ [0.5ex] 
 \hline\hline
 nn.Conv2d(128, 128, kernel$\_$size=3, padding=1)  \\
 \hline
 nn.BatchNorm2d \\
 \hline
 nn.ReLU \\
 \hline
 $\times 3$ \\
 \hline\hline
 flatten \\
 \hline
 nn.Linear(2048, 256)  \\ 
 \hline 
 nn.BatchNorm1d \\
 \hline
 nn.ReLU  \\
 \hline
 nn.Linear(256, 10) \\
 \hline 
 nn.Softmax \\
 \hline
\end{tabular}
\quad
 \begin{tabular}{||c||} 
 \hline
 Discriminator  \\ [0.5ex] 

 \hline\hline
 nn.Conv2d(128, 128, kernel$\_$size=3, padding=1)  \\
 \hline
 nn.ReLU \\
 \hline
 $\times 5$ \\
 \hline\hline
 Flatten \\
 \hline
 nn.Linear(2048, 256)  \\ 
 \hline 
 nn.ReLU  \\
 \hline 
 nn.Linear(256, 2) \\
 \hline 
 nn.Softmax \\
 \hline
\end{tabular}
\end{center}

\mypara{Office-31. } We use ResNet50~\citep{he2016deep} as the typical DNN model. For the DANN architecture, we use ResNet50 followed by a four-layer fully connected network with width 256 as the feature extractor.  The main classifier is a three-layer fully connected network with width 256 and the auxiliary classifier is a seven-layer fully connected network with width 256. We train all models for 100 epochs using Adam optimizer with batch size of 32 and learning rate schedule. The initial learning rate is $10^{-5}$ and it decreases to $10^{-6}$ after 50 epochs training. We augment the training data using random resized crop and random horizontal flip. The weight $\alpha$ for DANN training is initiated at 0 and is gradually changed to 1 using the same schedule discussed above. When fine-tuning the models, we set $\alpha=1$ and use Adam optimizer with learning rate of $5\times 10^{-6}$. The model architecture for DANN is presented below. 

\begin{center}
 \begin{tabular}{||c||} 
 \hline
 Encoder  \\ [0.5ex] 
 \hline\hline
 ResNet50(pretrained=True) \\
 \hline 
 nn.Linear(2048, 256)  \\ 
 \hline
 nn.ReLU  \\
 \hline \hline
 nn.Linear(256, 256)  \\
 \hline
 nn.ReLU \\
 \hline
 $\times 4$ \\
 \hline
\end{tabular}
\quad
 \begin{tabular}{||c||} 
 \hline
 Predictor  \\ [0.5ex] 
 \hline\hline
 nn.Linear(256, 256)  \\
 \hline
 nn.BatchNorm1d \\
 \hline
 nn.ReLU  \\
 \hline
 $\times 2$  \\
 \hline \hline
 nn.Linear(256, 31)  \\
 \hline
 nn.Softmax \\
 \hline
\end{tabular}
\quad
\begin{tabular}{||c||} 
 \hline
 Discriminator  \\ [0.5ex] 
 \hline\hline
  nn.Linear(256, 256)  \\ 
 \hline
 nn.ReLU  \\
 \hline
 $\times$6\\
 \hline\hline
 nn.Linear(256, 2)  \\
 \hline
 nn.Softmax \\
 \hline
\end{tabular}
\end{center}    

\mypara{CIFAR10-C. } We use ResNet34~\citep{he2016deep} as the typical DNN model. For the DANN architecture, we use ResNet34 as the feature extractor. The main classifier is a three-layer fully connected network with width 256 and the auxiliary classifier is a seven-layer fully connected network with width 256. We train all models for 100 epochs using Stochastic Gradient Decent (SGD) optimizer with Nesterov momentum and learning rate schedule. We set momentum $0.9$ and $\ell_2$ weight decay with a coefficient of $10^{-4}$. The initial learning rate is $0.1$ and it decreases by $0.1$ at 50, 75 and 90 epoch respectively. The batch size is $128$. We augment the training data using random crop with padding and random horizontal flip. The weight $\alpha$ for DANN training is initiated at 0 and is gradually changed to 0.1 using the same schedule discussed above. When fine-tuning the models, we set $\alpha=0.1$ and use the same SGD optimizer with a learning rate of $10^{-4}$. The model architecture for DANN is presented below. 

\begin{center}
 \begin{tabular}{||c||} 
 \hline
 Encoder  \\ [0.5ex] 
 \hline\hline
 ResNet34 \\
 \hline 
 nn.Linear(512, 256)  \\ 
 \hline
 nn.ReLU  \\\hline
\end{tabular}
\quad
 \begin{tabular}{||c||} 
 \hline
 Predictor  \\ [0.5ex] 
 \hline\hline
 nn.Linear(256, 256)  \\
 \hline
 nn.ReLU  \\
 \hline
 $\times 2$  \\
 \hline \hline
 nn.Linear(256, 10)  \\
 \hline
 nn.Softmax \\
 \hline
\end{tabular}
\quad
\begin{tabular}{||c||} 
 \hline
 Discriminator  \\ [0.5ex] 
 \hline\hline
  nn.Linear(256, 256)  \\ 
 \hline
 nn.ReLU  \\
 \hline
 $\times$6\\
 \hline\hline
 nn.Linear(256, 2)  \\
 \hline
 nn.Softmax \\
 \hline
\end{tabular}
\end{center}

\mypara{iWildCam. } We use ResNet50~\citep{he2016deep} as the typical DNN model. For the DANN architecture, we use ResNet50 followed by a four-layer fully connected network with width 256 as the feature extractor.  The main classifier is a three-layer fully connected network with width 256 and the auxiliary classifier is a seven-layer fully connected network with width 256. We train all models for 50 epochs using Adam optimizer with batch size of 128 and learning rate of $10^{-5}$. The weight $\alpha$ for DANN training is initiated at 0 and is gradually changed to 1 using the same schedule discussed above. When fine-tuning the models, we set $\alpha=1$ and use Adam optimizer with learning rate of $10^{-5}$. The model architecture for DANN is presented below. 

\begin{center}
 \begin{tabular}{||c||} 
 \hline
 Encoder  \\ [0.5ex] 
 \hline\hline
 ResNet50(pretrained=True) \\
 \hline 
 nn.Linear(2048, 256)  \\ 
 \hline
 nn.ReLU  \\
 \hline \hline
 nn.Linear(256, 256)  \\
 \hline
 nn.ReLU \\
 \hline
 $\times 4$ \\
 \hline
\end{tabular}
\quad
 \begin{tabular}{||c||} 
 \hline
 Predictor  \\ [0.5ex] 
 \hline\hline
 nn.Linear(256, 256)  \\ 
 \hline
 nn.BatchNorm1d \\
 \hline
 nn.ReLU  \\
 \hline
 $\times 2$  \\
 \hline \hline
 nn.Linear(256, 186)  \\
 \hline
 nn.Softmax \\
 \hline
\end{tabular}
\quad
\begin{tabular}{||c||} 
 \hline
 Discriminator  \\ [0.5ex] 
 \hline\hline
  nn.Linear(256, 256)  \\ 
 \hline
 nn.ReLU  \\
 \hline
 $\times$6\\
 \hline\hline
 nn.Linear(256, 2)  \\
 \hline
 nn.Softmax \\
 \hline
\end{tabular}
\end{center}     

\mypara{Amazon Review. } We use TF-IDF~\citep{tfidf} to transform texts into feature vectors with dimension 5,000. Then we build fully connected networks with ReLU activation. For typical DNN model, we use a four-layer fully connected network with width 128. For the DANN architecture, we use a four-layer fully connected network with width 128 as the feature extractor. The main classifier is a three-layer fully connected network with width 128 and the auxiliary classifier is a seven-layer fully connected network with width 256. We train all models for 50 epochs using Adam optimizer with batch size of 8 and learning rate of $10^{-3}$. The weight $\alpha$ for DANN training is initiated at 0 and is gradually changed to 1 using the same schedule discussed above. When fine-tuning the models, we set $\alpha=1$ and use Adam optimizer with learning rate of $10^{-3}$. The model architecture for DANN is presented below.     

\begin{center}
 \begin{tabular}{||c||} 
 \hline
 Encoder  \\ [0.5ex] 
 \hline\hline
 nn.Linear(5000, 128)  \\ 
 \hline
 nn.ReLU  \\
 \hline \hline
 nn.Linear(128, 128)  \\
 \hline
 nn.ReLU \\
 \hline
 $\times 3$ \\
 \hline
\end{tabular}
\quad
 \begin{tabular}{||c||} 
 \hline
 Predictor  \\ [0.5ex] 
 \hline\hline
 nn.Linear(128, 128)  \\ 
 \hline
 nn.ReLU  \\
 \hline
 $\times 2$  \\
 \hline \hline
 nn.Linear(128, 2)  \\
 \hline
 nn.Softmax \\
 \hline
\end{tabular}
\quad
\begin{tabular}{||c||} 
 \hline
 Discriminator  \\ [0.5ex] 
 \hline\hline
 nn.Linear(128, 256)  \\ 
 \hline
 nn.ReLU  \\ 
 \hline \hline 
  nn.Linear(256, 256)  \\ 
 \hline
 nn.ReLU  \\
 \hline
 $\times$5\\
 \hline\hline
 nn.Linear(256, 2)  \\
 \hline
 nn.Softmax \\
 \hline
\end{tabular}
\end{center}     



\subsubsection{Evaluation Metrics}
\mypara{Unsupervised Accuracy Estimation. } We use the absolute estimation error $|\hat{\acc}-\acc(f,U)|$ to measure the performance of the accuracy estimator $\hat{\acc}$. 

\mypara{Error Detection. } We formulate error detection as a binary classification problem (a test point detected to be mis-classified is regarded as in the positive class). Then we can use F1 score to measure the performance of the algorithms quantitatively.

\subsubsection{Baselines}
We consider the following baselines:

\mypara{Proxy Risk. } We consider proxy risk method~\citep{chuang2020estimating} as a baseline for both unsupervised accuracy estimation and error detection tasks. In proxy risk method, they train the check model and fine-tune it to maximize the disagreement using a separate target training dataset sampled from the distribution of the test dataset $U_X$. However, in practice, we might not have such a training dataset. Thus, in our implementation, we will use $U_X$ as the training dataset if a separate target training dataset is not available. Specifically, for Digits, we use a separate target training dataset, while for other dataset categories, we use $U_X$ as the target training dataset. Note that our method doesn't need a separate target training dataset. We follow their settings to set the hyperparameters. Specifically, for all experiments, we set $\lambda=50$ and $T_2=20$. In Digits and CIFAR10-C experiments, we set $\alpha=0.1$ and $\epsilon=1.1\cdot \epsilon_0$; in Office-31 and iWildCam experiments, we set $\alpha=1$ and $\epsilon=1.05\cdot \epsilon_0$, where $\epsilon_0$ is the value of divergence loss $R_S(f'g')+\alpha d(p_S^{g'}(Z), p_T^{g'}(Z))$ computed using the pre-trained check model $h'=f'g'$. The way to pre-train the check model $h'$ is the same as the DANN training method in Section~\ref{sec:model-arch-training-config}. For fine-tuning the check model to maximize the disagreement, we use SGD optimizer for CIFAR10-C and Adam optimizer for other datasets. The learning rate is set to be $10^{-3}$ in Digits, $10^{-6}$ in Office-31, $10^{-4}$ in CIFAR10-C, $10^{-5}$ in iWildCam and $10^{-3}$ in Amazon Review.      

\mypara{Avg Conf. } We consider the average confidence score as a baseline for unsupervised accuracy estimation task. We know if the model is well calibrated on $U_X$, then the average confidence of the model on $U_X$ can represent the accuracy. \cite{elsahar2019annotate} propose to use the average confidence score as a metric to measure the performance drop of a model. In our context, it is equivalent to using the average confidence as a measure for the accuracy.   

\mypara{Ens Avg Conf. } Deep ensemble~\citep{lakshminarayanan2017simple} has been shown to be an effective technique to improve the model calibration. So we consider the average confidence score of an ensemble model as a baseline for unsupervised accuracy estimation task. We use the same architecture and training procedure of the pre-trained model $f$ to train the models in the ensemble. The number of models in the ensemble is 10.   

\mypara{MSP. } We consider Maximum Softmax Probability (MSP)~\citep{hendrycks2016baseline} as a baseline for error detection. We pick the confidence threshold using a test dataset $\Dtest$ sampled from the training distribution $P_{X, Y}$ such that the fraction of data points in $\Dtest$ whose confidence scores are less than the threshold is equal to the error of the given model on $\Dtest$.  

\mypara{Trust Score. } We consider Trust Score~\citep{jiang2018trust} as a baseline for error detection task. We use the logit layer as the input to Trust Score. And we pick the threshold using a test dataset $\Dtest$ sampled from the training distribution $P_{X, Y}$ such that the fraction of data points in $\Dtest$ whose trust scores are less than the threshold is equal to the error of the given model on $\Dtest$.

\subsection{Full Plots for Unsupervised Accuracy Estimation}
\label{sec:plot-accuracy-estimation}
We plot the results for using typical DNN as the model $f$'s architecture in Figure~\ref{fig:accuracy-estimation-s-sl} and the results for using DANN-arch as the model $f$'s architecture in Figure~\ref{fig:accuracy-estimation-c-sl}. 

\begin{figure*}[t!]
    \centering
    \begin{subfigure}{\scalefactor\linewidth}
	    \centering
		\includegraphics[width=\linewidth]{figures/acc_estimation/digits_s_sl.pdf}
	\end{subfigure}
	\begin{subfigure}{\scalefactor\linewidth}
		\centering
		\includegraphics[width=\linewidth]{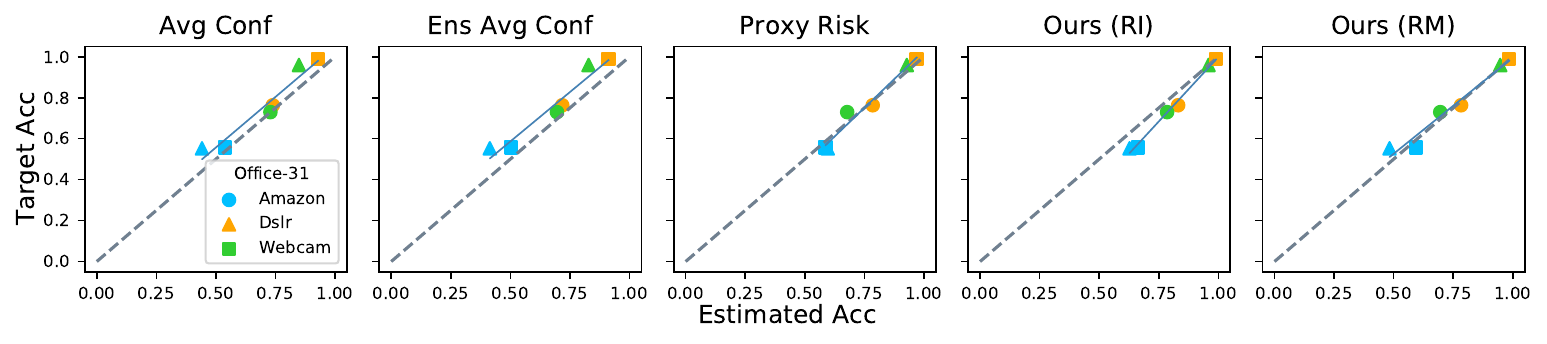}
	\end{subfigure} 
    \begin{subfigure}{\scalefactor\linewidth}
		\centering
		\includegraphics[width=\linewidth]{figures/acc_estimation/cifar_s_sl.pdf}
	\end{subfigure} 
	\begin{subfigure}{\scalefactor\linewidth}
		\centering
		\includegraphics[width=\linewidth]{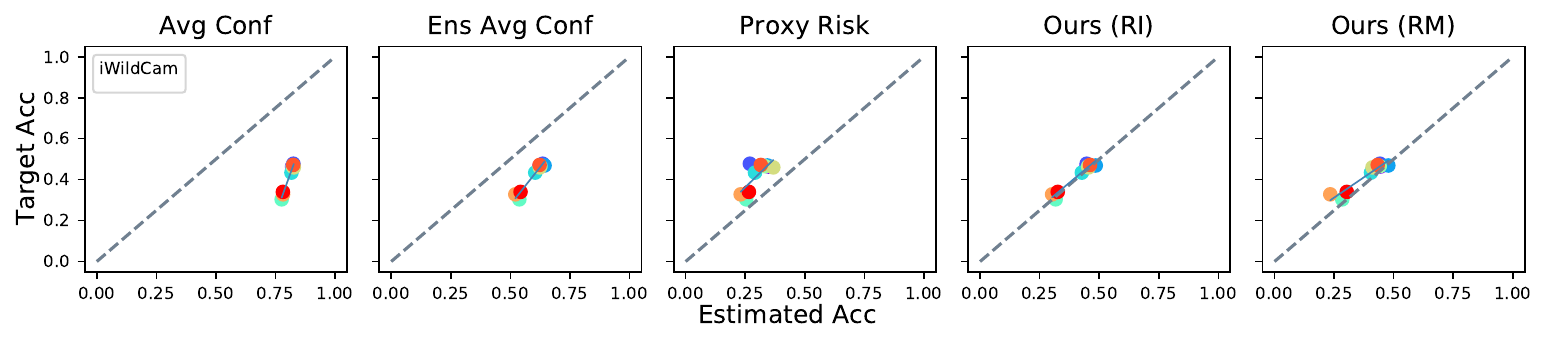}
	\end{subfigure} 
	\begin{subfigure}{\scalefactor\linewidth}
	    \centering
		\includegraphics[width=\linewidth]{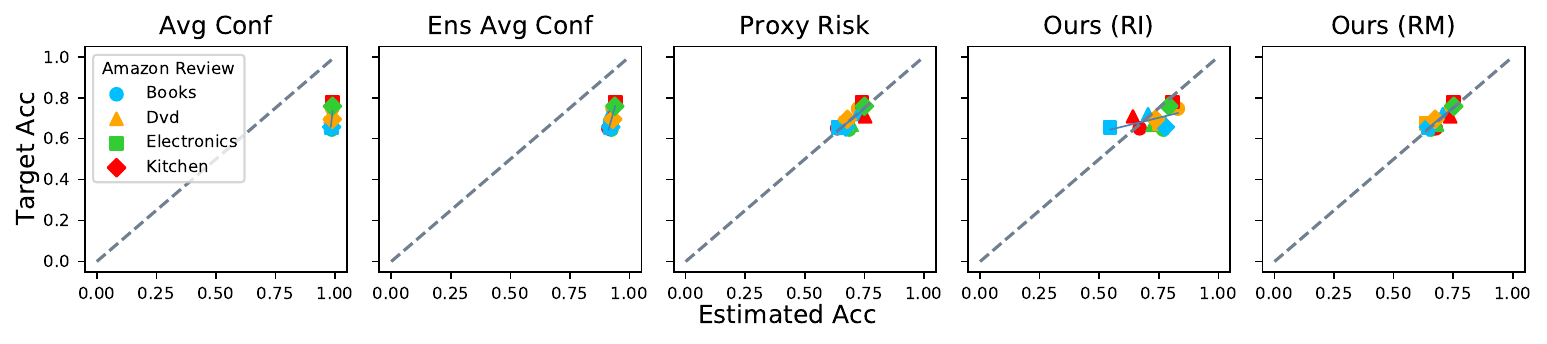}
	\end{subfigure}
	\caption{\small Accuracy estimation results for each dataset pair. We use typical DNN as the architecture for the model $f$. We use symbols to represent training datasets and colors to represent test datasets. For CIFAR10-C and iWildCam, there is only one training dataset with multiple test datasets. The dashed line represents perfect prediction (target accuracy = estimated accuracy). Points beneath (above) the dashed line indicate overestimation (underestimation). The solid lines are regression lines of the results. 
	}
	\label{fig:accuracy-estimation-s-sl}
	\vspace{-0.3cm}
\end{figure*}

\begin{figure*}[t!]
    \centering
    \begin{subfigure}{\scalefactor\linewidth}
	    \centering
		\includegraphics[width=\linewidth]{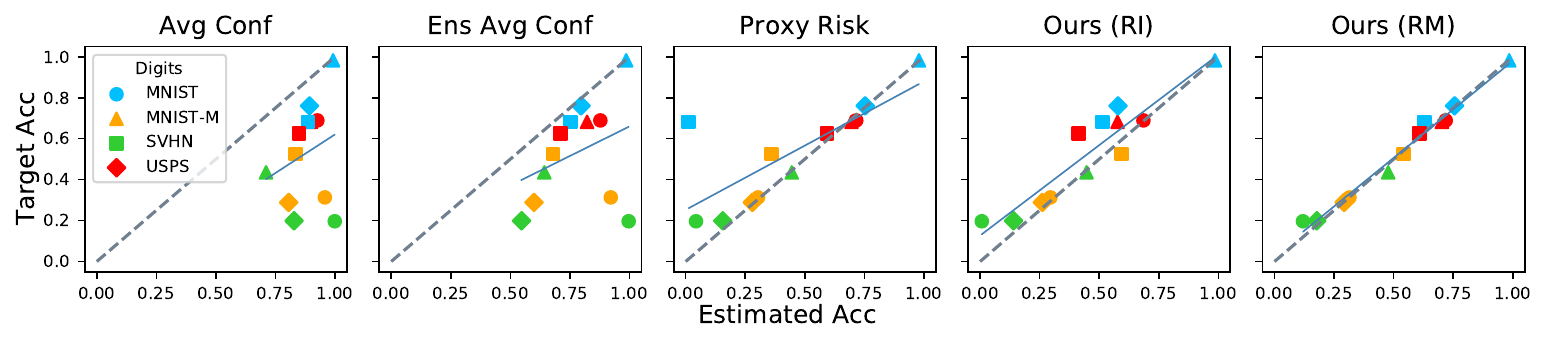}
	\end{subfigure}
	\begin{subfigure}{\scalefactor\linewidth}
		\centering
		\includegraphics[width=\linewidth]{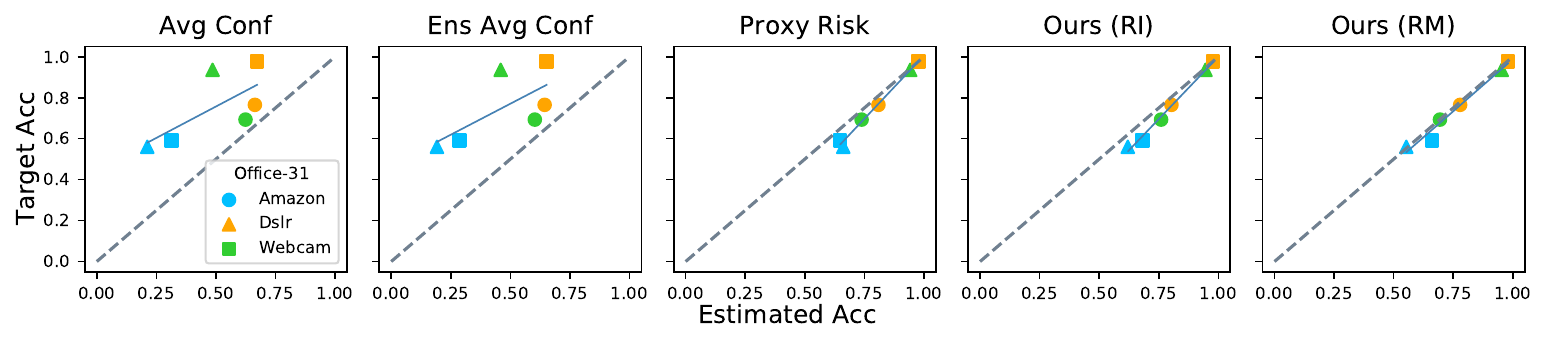}
	\end{subfigure} 
    \begin{subfigure}{\scalefactor\linewidth}
		\centering
		\includegraphics[width=\linewidth]{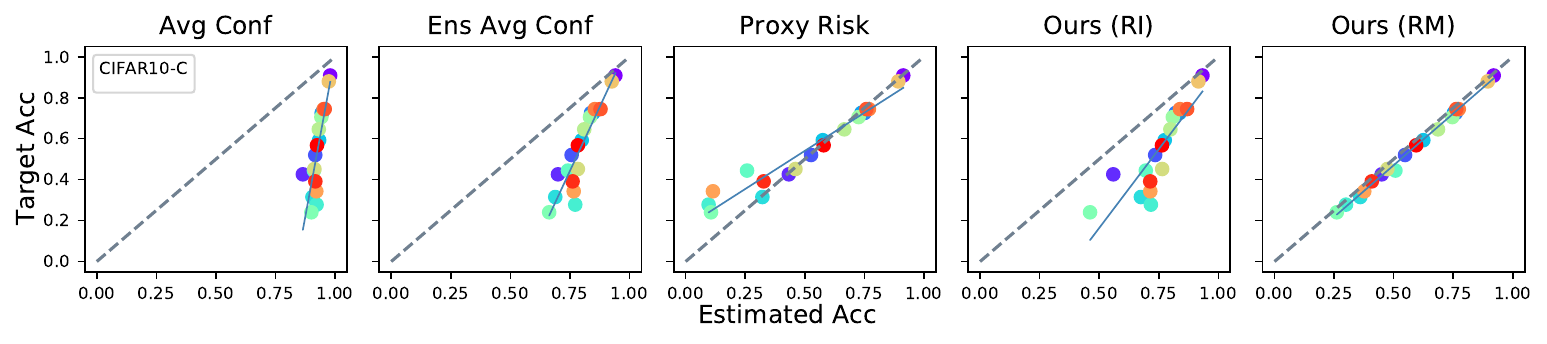}
	\end{subfigure} 
	\begin{subfigure}{\scalefactor\linewidth}
		\centering
		\includegraphics[width=\linewidth]{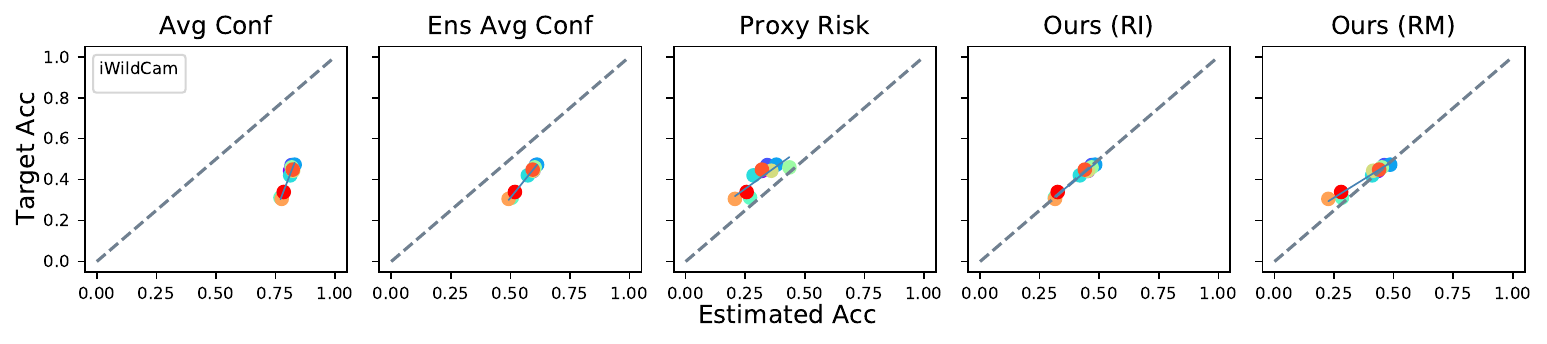}
	\end{subfigure} 
	\begin{subfigure}{\scalefactor\linewidth}
		\centering
		\includegraphics[width=\linewidth]{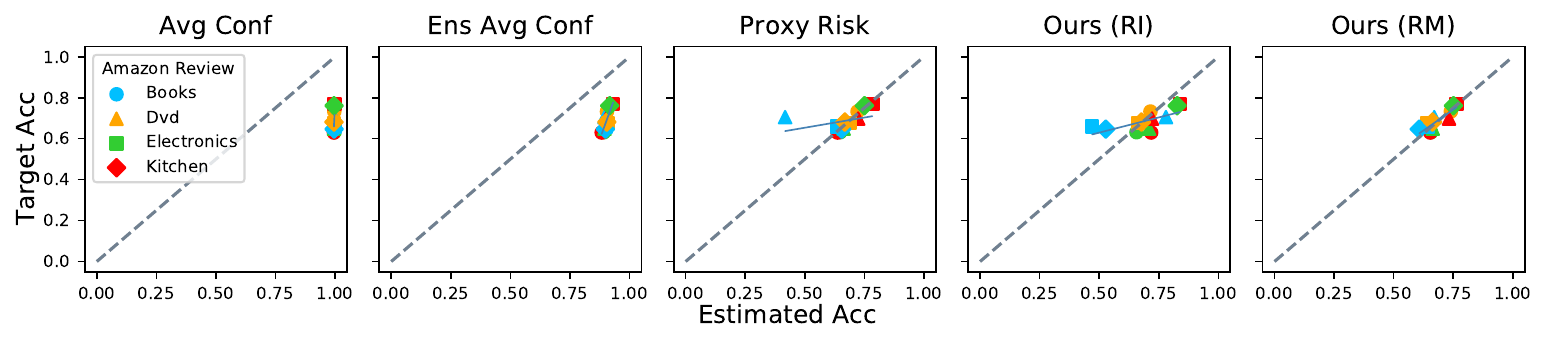}
	\end{subfigure} 
	\caption{\small Accuracy estimation results for each dataset pair. We use DANN-arch as the architecture for the model $f$. We use symbols to represent training datasets and colors to represent test datasets. For CIFAR10-C and iWildCam, there is only one training dataset with multiple test datasets. The dashed line represents perfect prediction (target accuracy = estimated accuracy). Points beneath (above) the dashed line indicate overestimation (underestimation). The solid lines are regression lines of the results.  
	}
	\label{fig:accuracy-estimation-c-sl}
	\vspace{-0.3cm}
\end{figure*}

\subsection{Multiple Runs of Experiments}
\label{sec:multiple-runs}

Since proxy risk method and our method require training and fine-tuning the models, there might be some variance in the results. Thus, we repeat each experiment in Table~\ref{tab:main-results} for proxy risk method and our method five times, and show the box plots of the results measured by average absolute estimation error and average F1 score (See Figure~\ref{fig:results-error-bar}). The results show that our method is consistently better than proxy risk method and the variance of the results of our method is generally smaller than that of proxy risk method. 

\begin{figure*}[t!]
    \centering
    \begin{subfigure}{\linewidth}
	    \centering
		\includegraphics[width=\linewidth]{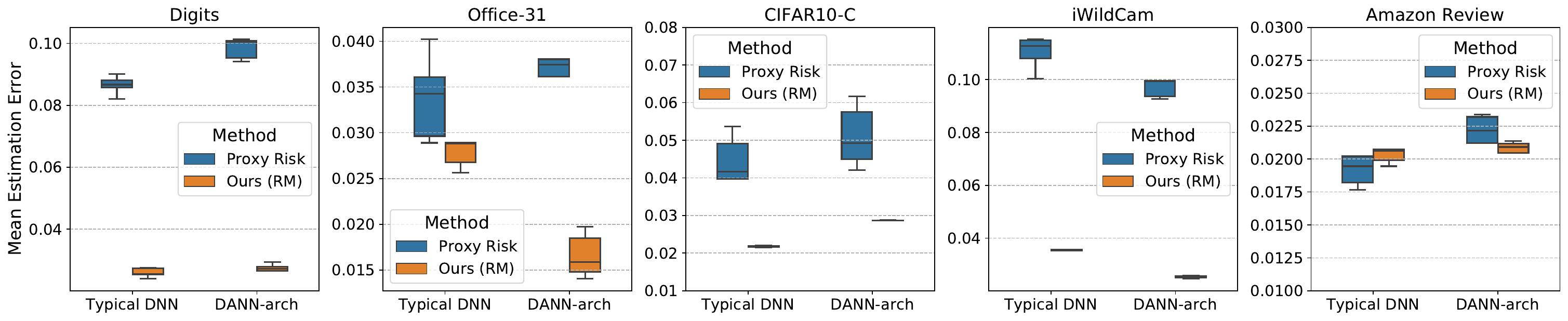}
		\caption{Unsupervised Accuracy Estimation}
	\end{subfigure}
	\begin{subfigure}{\linewidth}
		\centering
		\includegraphics[width=\linewidth]{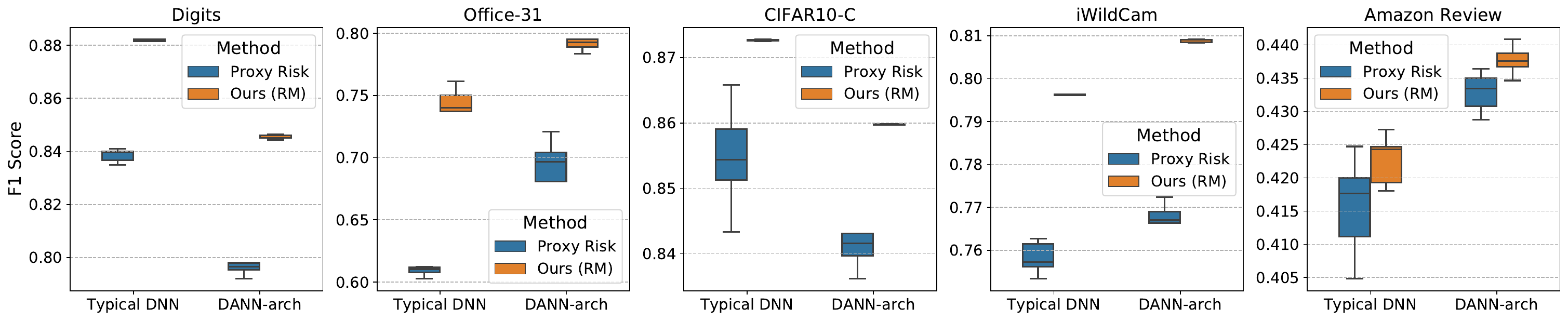}
		\caption{Error Detection} 
	\end{subfigure} 
	\vspace{-0.2cm}
	\caption{\small Results for multiple runs of experiments. 
	}
	\label{fig:results-error-bar}
\end{figure*}

\subsection{Validating the Theoretical Analysis} \label{app:validate-theory}

Our analysis relies on three conditions: (A) The ensemble models make small error on the test inputs correctly classified by $f$ (and thus tend to over-estimate the accuracy); (B) The ensemble models mostly disagree with $f$ on $R_X$; (C) The ensemble models have large diversity on $B_X$. Since we don't use threshold in our implementation, $B_X$ here is the data points in $W_X \setminus R_X$ where the prediction of the ensemble (via majority vote) is wrong. Our experiments on six dataset pairs over two dataset categories (MNIST$\rightarrow$MNIST-M, MNIST$\rightarrow$USPS and MNIST-M$\rightarrow$USPS on the Digits dataset category; Amazon$\rightarrow$Dslr, Dslr$\rightarrow$Webcam, and Amazon$\rightarrow$Webcam on the Office-31 dataset category) show that empirically they are roughly satisfied.  

Table~\ref{tab:validate-theory} shows the actual accuracy, the estimated accuracy obtained via the initial ensemble (Estimated Acc w/o self-training), and the estimated accuracy after applying the self-training (Estimated Acc w/ self-training). The comparison confirms the over-estimation and shows the self-training can rectify that. The table also shows the upper bound $\tilde{\nu}$ on average error on correct points $\nu$ over all iterations is small. The upper bound $\tilde{\gamma}$ on the average probability of agreement $\gamma$ between the ensemble models and $f$ on $R_X$ over all iterations is close to 0, so the ensemble models mostly disagree with $f$ on $R_X$. Finally, the lower bound $\sigma_L^2$ on the diversity of the ensemble $\sigma^2$ over all iterations is relatively large. 

Besides, to support our claims regarding pseudo-labels, we perform experiments (using the same settings as the experiments in Table~\ref{tab:validate-theory}) on MNIST$\rightarrow$MNIST-M, our observations are: (1) the pseudo-labels are not all correct. The accuracies of the pseudo-labels over three iterations are 93.89\%, 94.02\% and 93.82\%. (2) the new ensemble will become more accurate with the help of correct pseudo-labels. The accuracies of the ensembles on $U$ over three iterations are 89.56\%, 93.51\% and 94.53\%. (3) the new ensemble will be less diverse on the pseudo-labeled data. The upper bound of average $\sigma_x^2$ on the pseudo-labeled data over all iterations is 4.78\%, which is relatively small compared to $\sigma_L^2=26.54\%$. 

Furthermore, we perform an experiment (using the same settings as the experiments in Table~\ref{tab:validate-theory}) on MNIST$\rightarrow$MNIST-M to verify if our conditions still hold when the pre-trained classifier is a majority vote over an ensemble. The ensemble contains 10 models trained on $\Dtr$ with different random initializations. The results are similar to those for a single model. Our success conditions are still roughly satisfied and the accuracy estimation is great: $\tilde{\nu}=2.61\%$, $\tilde{\gamma}=0.59\%$,  $\sigma_L^2=28.94\%$ and the estimation error is 0.0009. 

\begin{table*}[t!]
    \centering
		\begin{tabular}{c|ccc|ccc}
			\toprule
		    Dataset Category &  \multicolumn{3}{c|}{Digits}  & \multicolumn{3}{c}{Office-31}  \\ \hline
	        Dataset Pair & M$\rightarrow$MM & M$\rightarrow$U & MM$\rightarrow$U & A$\rightarrow$D & D$\rightarrow$W & A$\rightarrow$W  \\ \hline 
	       Actual Acc & 27.19 & 67.56 & 60.04 & 76.31 & 95.97 & 72.96 \\
	       Estimated Acc w/o self-training  & 33.05 & 70.30 & 69.76 & 80.52 & 95.85 & 74.34 \\
	       Estimated Acc w/ self-training  & 27.50 & 68.21 & 64.28 & 78.11 & 95.09 & 70.57 \\
	       \hline
	       $\tilde{\nu}$ & 3.15 & 2.18 & 4.22 & 6.16 & 2.17 & 11.11 \\
	       $\tilde{\gamma}$ & 0.57 & 0.90 & 3.82 & 1.92 & 0.20 & 0.29 \\
	       $\sigma_L^2$ & 26.54 & 12.89 & 24.57 & 15.61 & 8.80 & 14.67 \\
			\bottomrule
		\end{tabular}
	\caption[]{\small Empirical results to support the theoretical analysis. We use typical DNN as the architecture for the model $f$. M is MNIST, MM is MNIST-M, U is USPS, A is Amazon, D is Dslr and W is Webcam. The ensemble training algorithm we use is $\ens_\textrm{RM}$. On Digits, we use $N=10$ and $T=3$ while on Office-31, we use $N=15$ and $T=2$. All values are percentages.}
	\label{tab:validate-theory}
\end{table*}

\subsection{Evaluation on Pre-trained Models with Various Architectures}
\label{app:evaluate-various-arch}
We evaluate our method on pre-trained models with different deep learning model architectures on Digits. The architectures considered are Convolutional Neural Network (CNN)~\cite{lecun1998gradient}, Convolutional Neural Network with Batch Normalization (CNN-BN), ResNet18, ResNet34~\cite{he2016deep}, DenseNet40 and DenseNet100~\cite{huang2017densely}. The results in Table~\ref{tab:eval-various-model-arch} demonstrate that our method consistently outperforms other methods on pre-trained models with various architectures. 

\begin{table}[t]
    \begin{adjustbox}{width=0.9\columnwidth,center}
		\begin{tabular}{l|l|c|l|c}
			\toprule
		   Task & \multicolumn{2}{|c}{Accuracy Estimation} & \multicolumn{2}{|c}{Error Detection} \\ \hline
		   Architecture of $f$ & Method & Estimation Error $\downarrow$ & Method & F1 score $\uparrow$  \\ \hline \hline 
		    \multirow{5}{0.13\linewidth}{CNN} & Avg Conf &  0.252$\pm$0.159 & MSP  & 0.595$\pm$0.228 \\ 
	       & Ens Avg Conf  &  0.141$\pm$0.111 & Trust Score &  0.567$\pm$0.125 \\ 
	       & Proxy Risk &  0.080$\pm$0.167 & Proxy Risk & 0.817$\pm$0.129 \\ 
	       & Ours (RI) &  0.120$\pm$0.125 & Ours (RI) & 0.735$\pm$0.170 \\
	       & Ours (RM) &  {\bf 0.020}$\pm$0.022 & Ours (RM) & {\bf 0.865}$\pm$0.075 \\ \hline
	       \multirow{5}{0.13\linewidth}{CNN-BN} & Avg Conf & 0.404$\pm$0.180 & MSP  &  0.467$\pm$0.195  \\ 
	       & Ens Avg Conf  &  0.337$\pm$0.229 & Trust Score &  0.496$\pm$0.195 \\ 
	       & Proxy Risk & 0.085$\pm$0.142 & Proxy Risk & 0.844$\pm$0.118  \\ 
	       & Ours (RI) &  0.164$\pm$0.218 & Ours (RI) & 0.698$\pm$0.235 \\
	       & Ours (RM) & {\bf 0.023}$\pm$0.020 & Ours (RM) & {\bf 0.881}$\pm$0.084 \\ \hline
	       \multirow{5}{0.13\linewidth}{ResNet18} & Avg Conf &  0.434$\pm$0.330 & MSP  &  0.365$\pm$0.193 \\ 
	       & Ens Avg Conf  &  0.245$\pm$0.220 & Trust Score &  0.408$\pm$0.176  \\ 
	       & Proxy Risk & 0.111$\pm$0.165 & Proxy Risk & 0.787$\pm$0.178 \\ 
	       & Ours (RI) &  0.173$\pm$0.154 & Ours (RI) & 0.684$\pm$0.246  \\
	       & Ours (RM) & {\bf 0.039}$\pm$0.043 & Ours (RM) & {\bf 0.834}$\pm$0.128 \\ \hline
	       \multirow{5}{0.13\linewidth}{ResNet34} & Avg Conf &  0.399$\pm$0.261  & MSP  &  0.484$\pm$0.114 \\ 
	       & Ens Avg Conf  &  0.288$\pm$0.242 & Trust Score &  0.551$\pm$0.153  \\ 
	       & Proxy Risk &   0.115$\pm$0.196 & Proxy Risk & 0.795$\pm$0.179 \\ 
	       & Ours (RI) &  0.181$\pm$0.170 & Ours (RI) & 0.670$\pm$0.168  \\
	       & Ours (RM) & {\bf 0.042}$\pm$0.043 & Ours (RM) & {\bf 0.834}$\pm$0.129  \\ \hline
	       \multirow{5}{0.13\linewidth}{DenseNet40} & Avg Conf &  0.389$\pm$0.309 & MSP  &  0.470$\pm$0.097 \\ 
	       & Ens Avg Conf  &  0.199$\pm$0.205 & Trust Score &  0.560$\pm$0.064 \\ 
	       & Proxy Risk &  0.115$\pm$0.168 & Proxy Risk & 0.794$\pm$0.185 \\ 
	       & Ours (RI) &  0.170$\pm$0.154 & Ours (RI) &  0.697$\pm$0.239 \\
	       & Ours (RM) &  {\bf 0.041}$\pm$0.046 & Ours (RM) & {\bf 0.833}$\pm$0.144 \\ \hline
	       \multirow{5}{0.13\linewidth}{DenseNet100} & Avg Conf &  0.388$\pm$0.350 & MSP  &  0.430$\pm$0.229 \\ 
	       & Ens Avg Conf  &  0.248$\pm$0.254 & Trust Score & 0.533$\pm$0.155 \\ 
	       & Proxy Risk &  0.127$\pm$0.176 & Proxy Risk & 0.759$\pm$0.200 \\ 
	       & Ours (RI) & 0.186$\pm$0.182  & Ours (RI) & 0.677$\pm$0.256  \\
	       & Ours (RM) & {\bf 0.047}$\pm$0.052 & Ours (RM) & {\bf 0.803}$\pm$0.160 \\ 
			\bottomrule
		\end{tabular}
	\end{adjustbox}
	\caption[]{\small Evaluate the methods for unsupervised accuracy estimation and error detection on pre-trained models with various model architectures. The models are trained using supervised learning method. We show the mean and standard deviation of absolute estimation error and F1 score (mean$\pm$std). The numbers are calculated over the training-test dataset pairs constructed in the Digits dataset category. {\bf Bold} numbers are the superior results. } 
	\label{tab:eval-various-model-arch}
\end{table}

\subsection{Implementation of the Framework with Explicit Thresholding} 
\label{sec:thresholding-implementation-results}

We also implement the Framework~\ref{alg:main-framework} with explicit thresholding. We use $\ens_\textrm{RM}$ to construct the ensemble $\{h_i\}_{i=1}^N$ and use the following pseudo-labeling strategy: for each $x\in R_X$, if the majority vote of the ensemble on $x$ is different from $f(x)$, then we use the majority vote as the pseudo-label; otherwise, we use a random label that is different from $f(x)$ as the pseudo-label. We perform experiments on Digits dataset category to compare the method using thresholding with Algorithm~\ref{alg:main} where we don't use thresholding. The results in Table~\ref{tab:thresholding-comparison-results} show that using explicit thresholding leads to similar performance as Algorithm~\ref{alg:main}.

\begin{table}
\centering
		\begin{tabular}{c|c|c|c}
			\toprule
			\multicolumn{2}{c|}{Method} & Accuracy Estimation & Error Detection \\ \hline
			Thresholding & Threshold $\tau$ & Estimation Error $\downarrow$ & F1 score $\uparrow$ \\ \hline \hline
			 Yes & 0.5 & 0.026$\pm$0.023 & 0.881$\pm$0.086 \\ \hline 
	         Yes & 0.4 & 0.033$\pm$0.030 & 0.877$\pm$0.086 \\ \hline 
	         No & - &  0.023$\pm$0.020 & 0.881$\pm$0.084 \\
		   \bottomrule
		\end{tabular}
 	\captionof{table}{\small Results for comparing our methods with and without thresholding. We use typical DNN as the architecture for the model $f$. We show the mean and standard deviation of absolute estimation error and F1 score (mean$\pm$std). The numbers are calculated over the training-test dataset pairs in Digits dataset category. } 
 	\label{tab:thresholding-comparison-results}
\end{table}

\subsection{Ablation Study on CIFAR10-C}
\label{sec:extra-ablation-studies}

\begin{figure}[t!]
    \centering
    \begin{subfigure}{0.3\linewidth}
	    \centering
		\includegraphics[width=\linewidth]{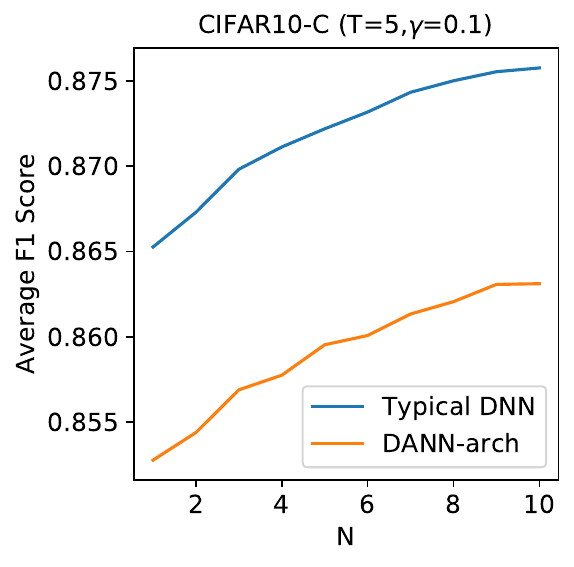}
	\end{subfigure}
	\begin{subfigure}{0.3\linewidth}
		\centering
		\includegraphics[width=\linewidth]{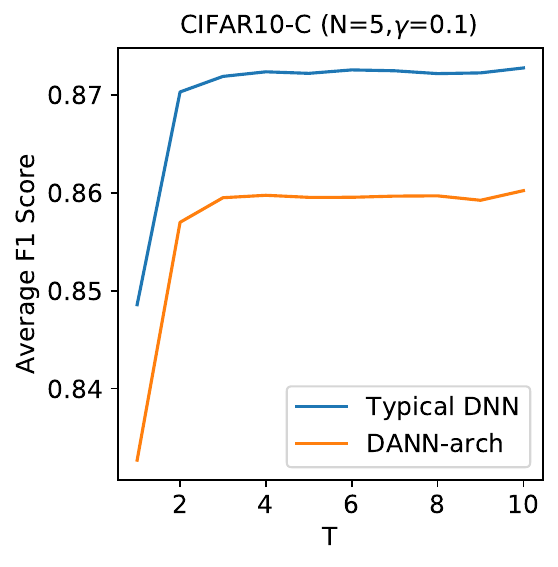}
	\end{subfigure} 
    \begin{subfigure}{0.3\linewidth}
		\centering
		\includegraphics[width=\linewidth]{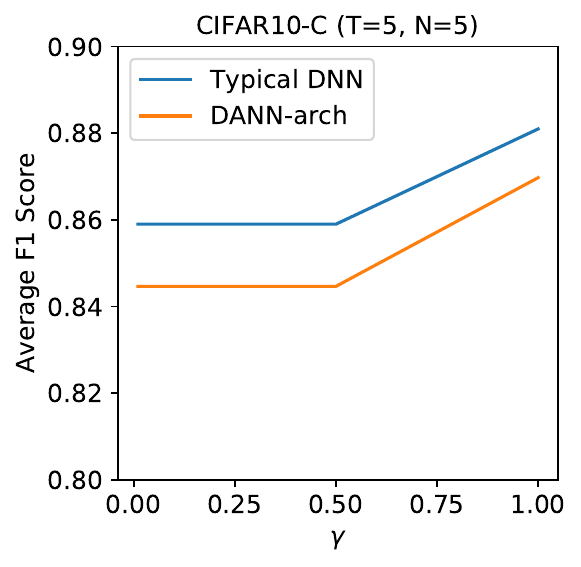}
	\end{subfigure} 
	\caption{\small Ablation study for the effect of ensemble and self-training techniques on CIFAR10-C. $N$ is the number of models in the ensemble, $T$ is the number of self-training iterations, and $\gamma$ is the weighting parameter for the loss term on the pseudo-labeled data. The ensemble training algorithm we use is $\ens_\textrm{RM}$. 
	}
	\label{fig:ablation-study-cifar10-c}
\end{figure}

We perform an additional ablation study on CIFAR10-C and the results are shown in Figure~\ref{fig:ablation-study-cifar10-c}. We observe a similar trend as that on Digits. 

\subsection{Analysis on Target Accuracy}
\label{sec:analysis-target-acc}
\begin{table}
\centering
\begin{adjustbox}{width=\columnwidth,center}
		\begin{tabular}{l|l|c|c|c|c}
			\toprule
			\multirow{2}{0.1\linewidth}{Dataset Category} & \multirow{2}{0.1\linewidth}{Dataset Pair} & \multirow{2}{0.12\linewidth}{Target Acc. of $f$} &  \multicolumn{3}{|c}{Ours (RM)} \\ \cline{4-6}
			 &  &  & Target Acc. of $\{h_i\}_{i=1}^N$ & Estimation Error & F1 score  \\ \hline \hline
			\multirow{3}{0.13\linewidth}{Digits} & M$\rightarrow$MM & 27.19\% & 93.99\% & 0.0013 & 0.9901 \\
			& M$\rightarrow$U & 67.56\% & 92.53\% & 0.0065 & 0.9418 \\
			& S$\rightarrow$U & 54.46\% & 67.46\% & 0.0010 & 0.7941 \\ \hline
			\multirow{3}{0.13\linewidth}{iWildCam} & 0$\rightarrow$1 & 46.40\% & 39.66\% &  0.0202 & 0.7872 \\
			& 0$\rightarrow$2 & 46.90\% & 42.73\% & 0.0335 & 0.7996 \\
			& 0$\rightarrow$9 & 34.01\% & 24.50\% & 0.0376 & 0.7871 \\
		   \bottomrule
		\end{tabular}
\end{adjustbox}
 	\captionof{table}{\small Results of comparing the target accuracy of the pre-trained model $f$ and the ensemble check models $\{h_i\}_{i=1}^N$. We use typical DNN as the architecture for the model $f$. The prediction of the ensemble $\{h_i\}_{i=1}^N$ is produced via majority vote. M is MNIST, MM is MNIST-M, U is USPS and S is SVHN. } 
 	\label{tab:comparing-acc-of-f-h}
\end{table}

The target accuracy of our ensemble check models can be higher or lower than that of the pre-trained model $f$ depending on the datasets and in both cases, our method can achieve good performance. In Table~\ref{tab:comparing-acc-of-f-h}, we compare the target accuracy of the pre-trained model $f$ and the ensemble check models $\{h_i\}_{i=1}^N$ generated by our method with $\ens_\textrm{RM}$ on some dataset pairs in Digits and iWildCam. The results show that on Digits where the domain adaptation method can work well, the target accuracy of $\{h_i\}_{i=1}^N$ built by $\ens_\textrm{RM}$ is usually higher than that of the pre-trained model $f$. In such cases, the high accuracy of $\{h_i\}_{i=1}^N$ plays an important role in the good performance of our method. However, on iWildCam where the domain adaptation method fails, the target accuracy of $\{h_i\}_{i=1}^N$ is typically not higher than that of $f$. In such cases, our method can still achieve good performance due to the diversity of the ensemble. 


\subsection{Analysis on Proxy Risk}
\label{sec:analysis-proxy-risk}
Proxy Risk has two stages: first train a check model using domain-invariant representations (DIR) and then fine-tune it to maximize the disagreement between the pre-trained model $f$ and the check model $h$ on the target data while maintaining small DIR loss. To study the effect of the disagreement maximization in Proxy Risk, we perform an experiment on Digits for the typical DNN $f$ by removing the disagreement maximization component from Proxy Risk. The results show that without disagreement maximization, the mean F1 score of Proxy Risk on Digits will decrease from 0.844 to 0.812. 

Besides, to see whether combining Proxy Risk with ensemble could lead to better results than our method, we perform an experiment on Digits for a variant of Proxy Risk: the check model is an ensemble of models $h'_1,\dots,h'_t$ via majority vote and each model $h'_j$ in the ensemble is fine-tuned from the pre-trained check model $h'$ using Proxy Risk's maximizing disagreement training objective with different randomness. We set $t=5$ and use typical DNN as the model architecture of $f$. The experimental result on Digits is: the mean absolute estimation error is 0.0814 and the mean F1 score is 0.8515. The performance of this variant of Proxy Risk is worse than that of our method with $\ens_\textrm{RM}$ (the result for our method is: mean estimation error is 0.0230 and mean F1 score is 0.8810).

\end{document}